\documentclass{article}

\usepackage[colorlinks=true, linkcolor=blue, citecolor=blue,urlcolor=black]{hyperref}


\usepackage[nonatbib,preprint]{neurips_2021}




\usepackage[utf8]{inputenc} 
\usepackage[T1]{fontenc}    
\usepackage{hyperref}       
\usepackage{url}            
\usepackage{booktabs}       
\usepackage{amsfonts}       
\usepackage{nicefrac}       
\usepackage{microtype}      
\usepackage{mathtools}
\usepackage{natbib}
\usepackage{xcolor}
\usepackage{bbm} 

\usepackage[ruled]{algorithm2e}
\usepackage{algorithmic}

\usepackage{chngcntr}
\usepackage{apptools}
\AtAppendix{\counterwithin{lemma}{section}}

\usepackage{amsmath}
\usepackage{mathtools}
\usepackage{amsthm}
\usepackage{amssymb}

\newcommand{\calA}{{\mathcal{A}}}

\newcommand{\calX}{{\mathcal{X}}}
\newcommand{\calS}{{\mathcal{S}}}
\newcommand{\calF}{{\mathcal{F}}}

\newcommand{\calK}{{\mathcal{K}}}
\newcommand{\calL}{{\mathcal{L}}}

\newcommand{\calT}{{\mathcal{T}}}

\newcommand{\vtheta}{\nu_{\theta}^{o_{1:t}, a_{1:t}}}
\newcommand{\vthetatau}{\nu_{\theta}^{o_{1:\tau-1}, a_{1:\tau-1}}}
\newcommand{\vgammatau}{\nu_{\gamma}^{o_{1:\tau-1}, a_{1:\tau-1}}}
\newcommand{\vthetastartau}{\nu_{\theta_*}^{o_{1:\tau-1}, a_{1:\tau-1}}}
\newcommand{\vgamma}{\nu_{\gamma}^{o_{1:t}, a_{1:t}}}

\newcommand{\1}{\mathbbm{1}}
\newcommand{\p}{\prime}

\newcommand{\sched}{\texttt{SCHED}}

\DeclareMathOperator*{\argmin}{argmin}
\DeclareMathOperator*{\argmax}{argmax}
\DeclareMathOperator*{\spn}{sp}

\newcommand{\eat}[1]{}


\newcommand{\rbr}[1]{\left(#1\right)}
\newcommand{\sbr}[1]{\left[#1\right]}

\newcommand{\abr}[1]{\left|#1\right|}

\DeclarePairedDelimiter\ceil{\lceil}{\rceil}









\newcommand{\field}[1]{\mathbb{#1}}

\newcommand{\fR}{\field{R}}

\newcommand{\E}{\field{E}}
\newcommand{\Ec}{\field{E_{\theta_*}}}
\newcommand{\pc}{\field{P_{\theta_*}}}
\renewcommand{\p}{\field{P}}

\newtheorem{lemma}{Lemma}
\newtheorem{theorem}{Theorem}

\newtheorem{definition}{Definition}
\newtheorem{assumption}{Assumption}

\newcommand{\order}{\ensuremath{\mathcal{O}}}

\newcommand{\otil}{\ensuremath{\tilde{\mathcal{O}}}}

\usepackage{prettyref}

\newcommand{\savehyperref}[2]{\texorpdfstring{\hyperref[#1]{#2}}{#2}}
\newrefformat{eq}{\savehyperref{#1}{Eq.~\textup{(\ref*{#1})}}}
\newrefformat{eqn}{\savehyperref{#1}{Equation~\ref*{#1}}}
\newrefformat{lem}{\savehyperref{#1}{Lemma~\ref*{#1}}}
\newrefformat{def}{\savehyperref{#1}{Definition~\ref*{#1}}}
\newrefformat{line}{\savehyperref{#1}{Line~\ref*{#1}}}
\newrefformat{thm}{\savehyperref{#1}{Theorem~\ref*{#1}}}
\newrefformat{corr}{\savehyperref{#1}{Corollary~\ref*{#1}}}
\newrefformat{cor}{\savehyperref{#1}{Corollary~\ref*{#1}}}
\newrefformat{sec}{\savehyperref{#1}{Section~\ref*{#1}}}
\newrefformat{subsec}{\savehyperref{#1}{Section~\ref*{#1}}}
\newrefformat{app}{\savehyperref{#1}{Appendix~\ref*{#1}}}
\newrefformat{assum}{\savehyperref{#1}{Assumption~\ref*{#1}}}
\newrefformat{ex}{\savehyperref{#1}{Example~\ref*{#1}}}
\newrefformat{fig}{\savehyperref{#1}{Figure~\ref*{#1}}}
\newrefformat{alg}{\savehyperref{#1}{Algorithm~\ref*{#1}}}
\newrefformat{rem}{\savehyperref{#1}{Remark~\ref*{#1}}}
\newrefformat{conj}{\savehyperref{#1}{Conjecture~\ref*{#1}}}
\newrefformat{prop}{\savehyperref{#1}{Proposition~\ref*{#1}}}
\newrefformat{proto}{\savehyperref{#1}{Protocol~\ref*{#1}}}
\newrefformat{prob}{\savehyperref{#1}{Problem~\ref*{#1}}}
\newrefformat{claim}{\savehyperref{#1}{Claim~\ref*{#1}}}
\newrefformat{que}{\savehyperref{#1}{Question~\ref*{#1}}}
\newrefformat{op}{\savehyperref{#1}{Open Problem~\ref*{#1}}}
\newrefformat{fn}{\savehyperref{#1}{Footnote~\ref*{#1}}}
\newrefformat{tab}{\savehyperref{#1}{Table~\ref*{#1}}}
\newrefformat{fig}{\savehyperref{#1}{Figure~\ref*{#1}}}

\title{Online Learning for Unknown Partially Observable MDPs}

\author{
  Mehdi Jafarnia-Jahromi \\
  University of Southern California\\
  \texttt{mjafarni@usc.edu} \\
  \And
  Rahul Jain \\
  University of Southern California\\
  \texttt{rahul.jain@usc.edu} \\
  \And
  Ashutosh Nayyar \\
  University of Southern California\\
  \texttt{ashutosn@usc.edu} \\
}

\begin{document}

\maketitle

\begin{abstract}
Solving Partially Observable Markov Decision Processes (POMDPs) is hard. Learning optimal controllers for POMDPs when the model is unknown is harder. Online learning of optimal controllers for unknown POMDPs, which requires efficient learning using regret-minimizing algorithms that effectively tradeoff exploration and exploitation, is even harder, and no solution exists currently. In this paper, we consider infinite-horizon average-cost POMDPs with unknown transition model, though a known observation model. We propose a natural posterior sampling-based reinforcement learning algorithm (\texttt{PSRL-POMDP}) and show that it achieves a regret bound of $\order(\log T)$, where $T$ is the time horizon, when the parameter set is finite. 
In the general case (continuous parameter set), we show that the algorithm achieves $\otil (T^{2/3})$ regret under two technical assumptions. To the best of our knowledge, this is the first online RL algorithm for POMDPs and has sub-linear regret.
\end{abstract}


\section{Introduction}
\label{sec: introduction}

Reinforcement learning (RL) considers the sequential decision-making problem of an agent in an unknown environment with the goal of minimizing the total cost. The agent faces a fundamental \textit{exploration-exploitation} trade-off: should it \textit{exploit} the available information to minimize the cost or should it \textit{explore} the environment to gather more information for future decisions? Maintaining a proper balance between exploration and exploitation is a fundamental challenge in RL and is measured with the notion of cumulative regret: the difference between the cumulative cost of the learning algorithm and that of the best policy.

The problem of balancing exploration and exploitation in RL has been successfully addressed for MDPs and algorithms with near optimal regret bounds known \citep{bartlett2009regal,jaksch2010near,ouyang2017learning,azar2017minimax,fruit2018efficient,jin2018q,abbasi2019exploration,zhang2019regret,zanette2019tighter,hao2020provably,wei2020model,wei2021learning}.
MDPs assume that the state is perfectly observable by the agent and the only uncertainty is about the underlying dynamics of the environment. However, in many real-world scenarios such as robotics, healthcare and finance, the state is not fully observed by the agent, and only a partial observation is available. These scenarios are modeled by Partially Observable Markov Decision Processes (POMDPs). In addition to the uncertainty in the environment dynamics, the agent has to deal with the uncertainty about the underlying state. It is well known \citep{kumar2015stochastic} that introducing an information or belief state (a posterior distribution over the states given the history of observations and actions) allows the POMDP to be recast as an MDP over the belief state space. The resulting algorithm requires a posterior update of the belief state which needs the transition and observation model to be fully known. This presents a significant difficulty when the model parameters are unknown. Thus, managing the exploration-exploitation trade-off for POMDPs is a significant challenge and to the best of our knowledge, no online RL algorithm with sub-linear regret is known.

In this paper, we consider infinite-horizon average-cost POMDPs with finite states, actions and observations. The underlying state transition dynamics is unknown, though we assume the observation kernel to be known. We propose a Posterior Sampling Reinforcement Learning algorithm (\texttt{PSRL-POMDP}) and prove that it achieves a Bayesian expected regret bound of $\order(\log T)$ in the finite (transition kernel) parameter set case where $T$ is the time horizon. We then show that in the general (continuous parameter set) case, it achieves $\otil(T^{2/3})$ under some technical assumptions.
The \texttt{PSRL-POMDP} algorithm is a natural extension of the \texttt{TSDE} algorithm for MDPs \citep{ouyang2017learning} with two main differences. First, in addition to the posterior distribution on the environment dynamics, the algorithm maintains a posterior distribution on the underlying state. Second, since the state is not fully observable, the agent cannot keep track of the number of visits to state-action pairs, a quantity that is crucial in the design of algorithms for tabular MDPs. Instead, we introduce a notion of \textit{pseudo count} and carefully handle its relation with the true counts to obtain sub-linear regret. To the best of our knowledge, \texttt{PSRL-POMDP} is the first online RL algorithm for POMDPs with sub-linear regret.

\subsection{Related Literature}
\label{sec: related literature}

We review the related literature in two main domains: efficient exploration for MDPs, and learning in POMDPs. 

\noindent\textbf{Efficient exploration in MDPs.} To balance the exploration and exploitation, two general techniques are used in the basic tabular MDPs: \textit{optimism in the face of uncertainty (OFU)}, and \textit{posterior sampling}. Under the OFU technique, the agent constructs a confidence set around the system parameters, selects an optimistic parameter associated with the minimum cost from the confidence set, and takes actions with respect to the optimistic parameter. This principle is widely used in the literature to achieve optimal regret bounds
\citep{bartlett2009regal,jaksch2010near,azar2017minimax,fruit2018efficient,jin2018q,zhang2019regret,zanette2019tighter,wei2020model}.
An alternative technique to encourage exploration is posterior sampling \citep{thompson1933likelihood}. In this approach, the agent maintains a posterior distribution over the system parameters, samples a parameter from the posterior distribution, and takes action with respect to the sampled parameter \citep{strens2000bayesian,osband2013more,fonteneau2013optimistic,gopalan2015thompson,ouyang2017learning,jafarnia2021online}. In particular, \citep{ouyang2017learning} proposes  \texttt{TSDE}, a posterior sampling-based  algorithm for the infinite-horizon average-cost MDPs.

Extending these results to the continuous state MDPs has been recently addressed with general function approximation \citep{osband2014model,dong2020root,ayoub2020model,wang2020reinforcement}, or in the special cases of linear function approximation \citep{abbasi2019politex,abbasi2019exploration,jin2020provably,hao2020provably,wei2021learning,wang2021provably}, and Linear Quadratic Regulators \citep{ouyang2017learningbased,dean2018regret,cohen2019learning,mania2019certainty,simchowitz2020naive,lale2020explore}. In general, POMDPs can be formulated as continuous state MDPs by considering the belief as the state. However, computing the belief requires the knowledge of the model parameters and thus unobserved in the RL setting. Hence, learning algorithms for continuous state MDPs cannot be directly applied to POMDPs. 

\noindent\textbf{Learning in POMDPs.} To the best of our knowledge, the only existing work with sub-linear regret in POMDPs is \cite{azizzadenesheli2017experimental}. However, their definition of regret is not with respect to the optimal policy, but with respect to the best memoryless policy (a policy that maps the current observation to an action). With our natural definition of regret, their algorithm suffers linear regret. Other learning algorithms for POMDPs either consider linear dynamics \citep{lale2020logarithmic,tsiamis2020online} or do not consider regret \citep{shani2005model,ross2007bayes,poupart2008model,cai2009learning,liu2011infinite,liu2013online,doshi2013bayesian,katt2018bayesian,azizzadenesheli2018policy} and are not directly comparable to our setting.

\section{Preliminaries}
\label{sec: preliminaries}

An infinite-horizon average-cost Partially Observable Markov Decision Process (POMDP) can be specified by $(\calS, \calA, \theta, C, O, \eta)$ where $\calS$ is the state space, $\calA$ is the action space, $C: \calS \times \calA \to [0, 1]$ is the cost function, and $O$ is the set of observations. Here $\eta: \calS \to \Delta_O$ is the observation kernel, and $\theta: \calS \times \calA \to \Delta_\calS$ is the transition kernel such that $\eta(o | s) = \p(o_t=o | s_t=s)$ and $\theta(s'|s, a) = \p(s_{t+1}=s'|s_t=s, a_t=a)$ where $o_t \in O$, $s_t \in \calS$ and $a_t \in \calA$ are the observation, state and action at time $t=1, 2, 3, \cdots$. Here, for a finite set $\calX$,  $\Delta_\calX$ is the set of all probability distributions on $\calX$. We assume that the state space, the action space and the observations are finite with size $|\calS|, |\calA|, |O|$, respectively. 

Let $\calF_t$ be the information available at time $t$ (prior to action $a_t$), i.e., the sigma algebra generated by the history of actions and observations $a_1, o_1, \cdots, a_{t-1}, o_{t-1}, o_t$ and let $\calF_{t+}$ be the information after choosing action $a_t$. Unlike MDPs, the state is not observable by the agent and the optimal policy cannot be a function of the state. Instead, the agent maintains a belief $h_t(\cdot;\theta) \in \Delta_\calS$ given by $h_t(s ; \theta) := \p(s_t=s | \calF_t;\theta)$, as a sufficient statistic for the history of observations and actions. Here we use the notation $h_t(\cdot ; \theta)$ to explicitly show the dependency of the belief on $\theta$. After taking action $a_t$ and observing $o_{t+1}$, the belief $h_t$ can be updated as
\begin{equation}
\label{eq: update rule of ht}
h_{t+1}(s';\theta) =\frac{\sum_{s}\eta(o_{t+1}|s') \theta(s'|s, a_t)h_t(s;\theta)}{\sum_{s'}\sum_{s}\eta(o_{t+1}|s') \theta(s'|s, a_t)h_t(s;\theta)}.
\end{equation}
This update rule is compactly denoted by $h_{t+1}(\cdot ; \theta) = \tau (h_t(\cdot;\theta), a_t, o_{t+1}; \theta)$, with the initial condition
\begin{equation*}
h_1(s;\theta) = \frac{\eta(o_1|s)h(s;\theta)}{\sum_s\eta(o_1|s)h(s;\theta)},
\end{equation*}
 where $h(\cdot;\theta)$ is the distribution of the initial state $s_1$ (denoted by $s_1 \sim h$). A deterministic stationary policy $\pi : \Delta_\calS \to \calA$ maps a belief to an action. The long-term average cost of a policy $\pi$ can be defined as
\begin{equation}
J_\pi(h; \theta) := \limsup_{T \to \infty} \frac{1}{T} \sum_{t=1}^T \E\Big[C\Big(s_t, \pi\big(h_t(\cdot;\theta)\big)\Big) \Big].
\end{equation}
Let  $J(h, \theta) := \inf_\pi J_\pi(h, \theta)$ be the optimal long-term average cost that in general may depend on the initial state distribution $h$, though we will assume it is independent of the initial distribution $h$ (and thus denoted by $J(\theta)$), and the following Bellman equation holds:
\begin{assumption}[Bellman optimality equation]
\label{ass: bellman equation}
There exist $J(\theta) \in \fR$ and a bounded function $v(\cdot; \theta):\Delta_\calS \to \fR$ such that for all $b \in \Delta_\calS$
\begin{equation}
\label{eq: bellman equation}
J(\theta) + v(b; \theta) = \min_{a \in \calA} \{c(b, a) + \sum_{o \in O}P(o | b, a; \theta)v(b'; \theta)\},
\end{equation}
\end{assumption}
where $v$ is called the relative value function, $b' = \tau(b, a, o; \theta)$ is the updated belief, $c(b, a) := \sum_s C(s, a)b(s)$ is the expected cost, and $P(o|b, a; \theta)$ is the probability of observing $o$ in the next step, conditioned on the current belief $b$ and action $a$, i.e.,
\begin{equation}
P(o | b, a; \theta) = \sum_{s' \in \calS}\sum_{s \in \calS} \eta(o | s') \theta(s' | s, a) b(s).
\end{equation}
Various conditions are known under which Assumption \ref{ass: bellman equation} holds, e.g.,  when the MDP is weakly communicating \citep{bertsekas2017dynamic}.
Note that if Assumption~\ref{ass: bellman equation} holds, the policy $\pi^*$ that minimizes the right hand side of \eqref{eq: bellman equation} is the optimal policy. More precisely,
\begin{lemma}
\label{lem: optimal policy}
Suppose Assumption~\ref{ass: bellman equation} holds. Then, the policy $\pi^*(\cdot, \theta):\Delta_\calS \to \calA$ given by
\begin{equation}
\label{eq: the optimal policy}
\pi^*(b; \theta) := \argmin_{a \in \calA} \{c(b, a) + \sum_{o \in O}P(o | b, a; \theta)v(b'; \theta)\}
\end{equation}
is the optimal policy with $J_{\pi^*}(h; \theta) = J(\theta)$ for all $h \in \Delta_\calS$.
\end{lemma}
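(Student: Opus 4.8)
The plan is to show that any pair $(J(\theta), v(\cdot;\theta))$ solving the Bellman optimality equation \eqref{eq: bellman equation} certifies optimality of the greedy policy $\pi^*$, by a standard dynamic-programming/telescoping argument carried out on the \emph{belief process} rather than the (unobserved) state process. Fix $\theta$ and a deterministic stationary policy $\pi:\Delta_\calS\to\calA$. Running $\pi$, the beliefs $b_t := h_t(\cdot;\theta)$ form a controlled Markov chain on $\Delta_\calS$ via $b_{t+1} = \tau(b_t,\pi(b_t),o_{t+1};\theta)$ as in \eqref{eq: update rule of ht}, and two identities hold along any trajectory: (i) $\E[\,C(s_t,\pi(b_t))\mid \calF_t\,] = \sum_s C(s,\pi(b_t))\,b_t(s) = c(b_t,\pi(b_t))$ by the very definition of the belief $h_t(s;\theta)=\p(s_t=s\mid\calF_t;\theta)$; and (ii) conditioned on $\calF_{t+}$ the observation $o_{t+1}$ has law $P(\cdot\mid b_t,\pi(b_t);\theta)$, so $\E[\,v(b_{t+1};\theta)\mid \calF_{t+}\,] = \sum_{o\in O} P(o\mid b_t,\pi(b_t);\theta)\, v\big(\tau(b_t,\pi(b_t),o;\theta);\theta\big)$.

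Next I would derive the lower bound $J_\pi(h;\theta)\ge J(\theta)$. Since the right-hand side of \eqref{eq: bellman equation} is a minimum over $a\in\calA$, for every belief $b$ we have $J(\theta)+v(b;\theta) \le c(b,a) + \sum_{o} P(o\mid b,a;\theta)\,v(b';\theta)$ for the particular choice $a=\pi(b)$. Evaluating at $b=b_t$, taking expectations, and applying identities (i)--(ii) gives $J(\theta) + \E[v(b_t;\theta)] \le \E[C(s_t,\pi(b_t))] + \E[v(b_{t+1};\theta)]$. Summing over $t=1,\dots,T$ and telescoping yields $T\,J(\theta) \le \sum_{t=1}^T \E[C(s_t,\pi(b_t))] + \E[v(b_{T+1};\theta)] - \E[v(b_1;\theta)]$. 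Dividing by $T$, taking $\limsup_{T\to\infty}$, and using that $v(\cdot;\theta)$ is bounded (so $\E[v(b_{T+1};\theta)]/T\to 0$), we conclude $J(\theta)\le J_\pi(h;\theta)$; as $\pi$ and $h$ were arbitrary, $J(\theta)\le J(h,\theta)$ for all $h$.

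For the matching upper bound, note that $\calA$ is finite, so the $\argmin$ in \eqref{eq: the optimal policy} is attained and $\pi^*(\cdot;\theta)$ is a well-defined deterministic stationary policy for which the Bellman inequality above holds with \emph{equality}. Repeating the telescoping computation with equalities gives $T\,J(\theta) = \sum_{t=1}^T \E[C(s_t,\pi^*(b_t))] + \E[v(b_{T+1};\theta)] - \E[v(b_1;\theta)]$, and dividing by $T$ and taking $\limsup$ (again killing the boundary terms by boundedness of $v$) yields $J_{\pi^*}(h;\theta) = J(\theta)$ for every initial distribution $h$. Combined with the lower bound this also shows $J(h,\theta)=J(\theta)$ is independent of $h$, and that $\pi^*$ attains it — which is exactly the claim.

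The argument is essentially routine verification, so there is no deep obstacle; the two points that need care are the probabilistic bookkeeping in identities (i) and (ii) — that the belief recursion is the correct conditional law and that $o_{t+1}\mid\calF_{t+}$ is distributed as $P(\cdot\mid b_t,a_t;\theta)$, both immediate from \eqref{eq: update rule of ht} and the definition of $P(o\mid b,a;\theta)$ — and the invocation of the boundedness of $v$ in Assumption~\ref{ass: bellman equation} to eliminate the $\E[v(b_{T+1};\theta)]/T$ term when passing to the $\limsup$. (No ergodicity or $\liminf=\limsup$ argument is needed, since $J_\pi$ is defined via $\limsup$.)
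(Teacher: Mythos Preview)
Your proposal is correct and follows essentially the same telescoping argument as the paper's proof: convert $\E[C(s_t,\pi(h_t))]$ to $\E[c(h_t,\pi(h_t))]$ via the tower property, invoke the Bellman inequality at each belief, telescope, and use boundedness of $v$ to kill the boundary term. The paper's write-up is terser (it leaves your identity (ii) implicit), but the structure and ingredients are identical.
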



Note that if $v$ satisfies the Bellman equation, so does $v$ plus any constant. Therefore, without loss of generality, and since $v$ is bounded, we can assume that $\inf_{b \in \Delta_\calS}v(b; \theta) = 0$ and define the span of a POMDP as $\spn(\theta) := \sup_{b \in \Delta_\calS}v(b; \theta)$. Let $\Theta_H$ be the class of POMDPs that satisfy Assumption~\ref{ass: bellman equation} and have $\spn(\theta) \leq H$ for all $\theta \in \Theta_H$. In Section~\ref{sec: finite}, we consider a finite subset $\Theta \subseteq \Theta_H$ of POMDPs.  In Section~\ref{sec: general}, the general class $\Theta = \Theta_H$ is considered. 

\paragraph{The learning protocol.} We consider the problem of an agent interacting with an unknown randomly generated POMDP $\theta_*$, where $\theta_* \in \Theta$ is randomly generated according to the probability distribution $f(\cdot)$.\footnote{In Section~\ref{sec: finite}, $f(\cdot)$ should be viewed as a probability mass function.} After the initial generation of $\theta_*$, it remains fixed, but unknown to the agent. The agent interacts with the POMDP $\theta_*$ in $T$ steps. Initially, the agent starts from state $s_1$ that is randomly generated according to the conditional probability mass function $h(\cdot;\theta_*)$. At time $t=1, 2, 3, \cdots, T$, the agent observes $o_t \sim \eta(\cdot|s_t)$, takes action $a_t$ and suffers cost of $C(s_t, a_t)$. The environment, then determines the next state $s_{t+1}$ which is randomly drawn from the probability distribution $\theta_*(\cdot | s_t, a_t)$. Note that although the cost function $C$ is assumed to be known, the agent cannot observe the value of $C(s_t, a_t)$ since the state $s_t$ is unknown to the agent. The goal of the agent is to minimize the expected cumulative regret defined as
\begin{equation}
R_T := \Ec\Big[\sum_{t=1}^T\Big[C(s_t, a_t) - J(\theta_*)\Big]\Big],
\end{equation}
where the expectation is with respect to the prior distribution $h(\cdot;\theta_*)$ for $s_1$, the randomness in the state transitions, and the randomness in the algorithm. Here, $\Ec[\cdot]$ is a shorthand for $\E[\cdot|\theta_*]$. In Section~\ref{sec: finite}, a regret bound is provided on $R_T$, however, Section~\ref{sec: general} considers $\E[R_T]$ (also called Bayesian regret) as the performance measure for the learning algorithm. We note that the Bayesian regret is widely considered in the MDP literature \citep{osband2013more,gopalan2015thompson,ouyang2017learning,ouyang2017learningbased}.


\section{The \texttt{PSRL-POMDP} Algorithm}
\label{sec: posterior sampling algorithm}

We propose a general Posterior Sampling Reinforcement Learning for POMDPs (\texttt{PSRL-POMDP}) algorithm (Algorithm~\ref{alg: posterior sampling}) for both the finite-parameter and the general case. The algorithm maintains a joint distribution on the unknown parameter $\theta_*$ as well as the state $s_t$. \texttt{PSRL-POMDP} takes the prior distributions $h$ and $f$ as input. At time $t$, the agent computes the posterior distribution $f_t(\cdot)$ on the unknown parameter $\theta_*$ as well as the posterior conditional probability mass function (pmf) $h_t(\cdot;\theta)$ on the state $s_t$ for $\theta \in \Theta$. Upon taking action $a_t$ and observing $o_{t+1}$, the posterior distribution at time $t+1$ can be updated by applying the Bayes' rule as\footnote{When the parameter set is finite, $\int_{\theta}$ should be replaced with $\sum_\theta$.}
\begin{align}
\label{eq: update rule of ft and ht}
f_{t+1}(\theta) &= \frac{\sum_{s, s'}\eta(o_{t+1}|s') \theta(s'|s, a_t)h_t(s;\theta)f_t(\theta)}{\int_{\theta} \sum_{s,s'}\eta(o_{t+1}|s') \theta(s'|s, a_t)h_t(s;\theta)f_t(\theta)d\theta}, \nonumber \\
h_{t+1}(\cdot ;\theta) &= \tau (h_t(\cdot;\theta), a_t, o_{t+1};\theta),
\end{align}
with the initial condition
\begin{align}
\label{eq: initial update of ht and ft}
f_1(\theta) &= \frac{\sum_s \eta(o_1|s)h(s;\theta)f(\theta)}{\int_{\theta} \sum_s \eta(o_1|s)h(s;\theta)f(\theta)d\theta}, \nonumber \\
h_1(s;\theta) &= \frac{\eta(o_1|s)h(s;\theta)}{\sum_s\eta(o_1|s)h(s;\theta)}.
\end{align}
Recall that $\tau (h_t(\cdot;\theta), a_t, o_{t+1};\theta)$ is a compact notation for \eqref{eq: update rule of ht}. 
In the special case of perfect observation at time $t$, $h_{t}(s;\theta) = \1(s_{t}=s)$ for all $\theta \in \Theta$ and $s \in \calS$. Moreover, the update rule of $f_{t+1}$ reduces to that of fully observable MDPs (see Eq. (4) of \cite{ouyang2017learning}) in the special case of perfect observation at time $t$ and $t+1$.

Let $n_{t}(s, a) = \sum_{\tau=1}^{t-1} \1(s_\tau=s, a_\tau=a)$ be the number of visits to state-action $(s, a)$ by time $t$. The number of visits $n_{t}$ plays an important role in  learning for MDPs \citep{jaksch2010near,ouyang2017learning} and is one of the two criteria to determine the length of the episodes in the \texttt{TSDE} algorithm for MDPs \citep{ouyang2017learning}. However, in POMDPs, $n_{t}$ is not $\calF_{{(t-1)}+}$-measurable since the states are not observable. Instead, let $\tilde n_{t}(s, a) := \E[n_{t}(s, a) | \calF_{{(t-1)}+}]$, and define the \textit{pseudo-count} $\tilde m_t$ as follows:
\begin{definition}
$(\tilde m_t)_{t=1}^T$ is a pseudo-count if it is a non-decreasing, integer-valued sequence of random variables such that $\tilde m_t$ is $\calF_{{(t-1)}+}$-measurable, $\tilde m_t(s, a) \geq \ceil{\tilde n_t(s, a)}$, and $\tilde m_t(s, a) \leq t$ for all $t \leq T+1$.
\end{definition}
An example of such a sequence is simply $\tilde m_t(s, a) = t$ for all state-action pair $(s, a) \in \calS \times \calA$. This is used in Section~\ref{sec: finite}. Another example is $\tilde m_t(s, a) := \max \{\tilde m_{t-1}(s, a), \ceil{\tilde n_t(s, a)}\}$ with $\tilde m_0(s, a) = 0$ for all $(s, a) \in \calS \times \calA$ which is used in Section~\ref{sec: general}. Here $\ceil{\tilde n_t(s, a)}$ is the smallest integer that is greater than or equal to $\tilde n_t(s, a)$. By definition, $\tilde m_t$ is integer-valued and non-decreasing which is essential to bound the number of episodes in the algorithm for the general case (see Lemma~\ref{lem: number of episodes}).

\begin{algorithm}
\caption{\textsc{\texttt{PSRL-POMDP}}}
\label{alg: posterior sampling}
\begin{algorithmic}[1]
\REQUIRE prior distributions $f(\cdot), h(\cdot;\cdot)$ \\
\textbf{Initialization: }$t \gets 1, t_1 \gets 0$\\
Observe $o_1$ and compute $f_1, h_1$ according to \eqref{eq: initial update of ht and ft} \\
\FOR{{\normalfont episodes} $k=1, 2, \cdots$}
\STATE	$T_{k-1} \gets t - t_k$\\
\STATE	$t_k \gets t$\\
\STATE	Generate $\theta_k \sim f_{t_k}(\cdot)$ and compute $\pi_k(\cdot) = \pi^*(\cdot;\theta_k)$ from \eqref{eq: the optimal policy}\\
	\WHILE{$t \leq \sched(t_k, T_{k-1})$ and $\tilde m_{t}(s, a) \leq 2 \tilde m_{t_k}(s, a)$ for all $(s, a) \in \calS \times \calA$}
	\STATE Choose action $a_t = \pi_k(h_t(\cdot;\theta_k))$ and observe $o_{t+1}$\\
	\STATE Update $f_{t+1}, h_{t+1}$ according to \eqref{eq: update rule of ft and ht}\\
	\STATE $t \gets t+1$	
	\ENDWHILE
\ENDFOR
\end{algorithmic}
\end{algorithm}


Similar to the \texttt{TSDE} algorithm for fully observable MDPs, \texttt{PSRL-POMDP} algorithm proceeds in episodes. In the beginning of episode $k$, POMDP $\theta_k$ is sampled from the posterior distribution $f_{t_k}$ where $t_k$ denotes the start time of episode $k$. The optimal policy $\pi^*(\cdot; \theta_k)$ is then computed and used during the episode. Note that the input of the policy is $h_t(\cdot;\theta_k)$. The intuition behind such a choice (as opposed to the belief $b_t(\cdot) := \int_{\theta} h_t(\cdot;\theta)f_t(\theta)d\theta$) is that during episode $k$, the agent treats $\theta_k$ to be the true POMDP and adopts the optimal policy with respect to it. Consequently, the input to the policy should also be the conditional belief with respect to the sampled $\theta_k$. 

A key factor in designing posterior sampling based algorithms is the design of episodes. Let $T_k$ denote the length of episode $k$. In \texttt{PSRL-POMDP}, a new episode starts if either $t > \sched(t_k, T_{k-1})$ or $\tilde m_{t}(s, a) > 2\tilde m_{t_k}(s, a)$. In the finite parameter case (Section~\ref{sec: finite}), we consider $\sched(t_k, T_{k-1}) = 2t_k$ and $\tilde m_t(s, a) = t$. With these choices, the two criteria coincide and ensure that the start time and the length of the episodes are deterministic. In Section~\ref{sec: general}, we use $\sched(t_k, T_{k-1}) = t_k + T_{k-1}$ and $\tilde m_t(s, a) := \max \{\tilde m_{t-1}(s, a), \ceil{\tilde n_t(s, a)}\}$. This guarantees that $T_k \leq T_{k-1} + 1$ and $\tilde m_t(s, a) \leq 2\tilde m_{t_k}(s, a)$. These criteria are previously introduced in the \texttt{TSDE} algorithm \citep{ouyang2017learning} except that \texttt{TSDE} uses the true count $n_t$ rather than $\tilde m_t$.


\section{Finite-Parameter Case ($|\Theta| < \infty$)}
\label{sec: finite}

In this section, we consider $\Theta \subseteq \Theta_H$ such that $|\Theta| < \infty$. When $\Theta$ is finite, the posterior distribution concentrates on the true parameter exponentially fast if the transition kernels are separated enough (see Lemma~\ref{lem: 1-ft}). This allows us to achieve a regret bound of $\order(H\log T)$. Let $o_{1:t}, a_{1:t}$ be shorthand for the history of observations $o_1, \cdots, o_t$ and the history of actions $a_1, \cdots, a_t$, respectively. Let $\vtheta(o)$ be the probability of observing $o$ at time $t+1$ if the action history is $a_{1:t}$, the observation history is $o_{1:t}$, and the underlying transition kernel is $\theta$, i.e.,
\begin{align*}
\vtheta(o) := \p(o_{t+1}=o | o_{1:t}, a_{1:t}, \theta_*=\theta).
\end{align*}
The distance between $\vtheta$ and $\vgamma$ is defined by Kullback Leibler (KL-) divergence as follows. For a fixed state-action pair $(s, a)$ and any $\theta, \gamma \in \Theta$, denote by $\calK(\vtheta \| \vgamma)$, the Kullback Leibler (KL-) divergence between the probability distributions $\vtheta$ and $\vgamma$ is given by
\begin{align*}
\calK(\vtheta \| \vgamma) := \sum_{o} \vtheta(o) \log\frac{\vtheta(o)}{\vgamma(o)}.
\end{align*}
It can be shown that $\calK(\vtheta \| \vgamma) \geq 0$ and that equality holds if and only if $\vtheta = \vgamma$. Thus, KL-divergence can be thought of as a measure of divergence of $\vgamma$ from $\vtheta$. In this section, we need to assume that the transition kernels in $\Theta$ are distant enough in the following sense.
\begin{assumption}
\label{ass: kl divergence}
For any time step $t$, any history of observations $o_{1:t}$ and actions $a_{1:t}$, and any two transition kernels $\theta, \gamma \in \Theta$, there exists a positive constant $\epsilon > 0$ such that $\calK(\vtheta \| \vgamma) \geq \epsilon$.
\end{assumption}
This assumption is similar to the one used in \cite{kim2017thompson}.

\begin{theorem}
\label{thm: finite parameter regret}
Suppose Assumptions~\ref{ass: bellman equation} and \ref{ass: kl divergence} hold. Then, the regret bound of Algorithm~\ref{alg: posterior sampling} with $\sched(t_k, T_{k-1}) = 2t_k$ and $\tilde m_t(s, a) = t$ for all state-action pairs $(s, a)$ is bounded as
\begin{align*}
R_T \leq H\log T + \frac{4(H+1)}{(e^{-\beta}-1)^2},
\end{align*}
where $\beta > 0$ is a universal constant defined in Lemma~\ref{lem: 1-ft}.
\end{theorem}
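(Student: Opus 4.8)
The plan is to run the posterior‑sampling regret decomposition of \texttt{TSDE} \citep{ouyang2017learning} at the level of beliefs, and then exploit the fact that, under Assumption~\ref{ass: kl divergence}, the posterior $f_{t_k}$ concentrates on $\theta_*$ geometrically fast. A first useful observation is that with $\tilde m_t(s,a)=t$ and $\sched(t_k,T_{k-1})=2t_k$ the two exit conditions of the inner loop coincide ($t\le 2t_k$), so the episodes are deterministic: $t_k=2^k-1$, $T_k=2^k$, and the number of episodes begun by time $T$ is $K_T\le\log_2(T+1)$. Writing $\bar b_t:=h_t(\cdot;\theta_*)$ and, inside episode $k$, $b_t:=h_t(\cdot;\theta_k)$ (so $a_t=\pi^*(b_t;\theta_k)$), I would first note that conditionally on $\calF_t$ the state $s_t$ has law $\bar b_t$, whence $\Ec[C(s_t,a_t)\mid\calF_t]=c(\bar b_t,a_t)$ and $\sum_t\big(C(s_t,a_t)-c(\bar b_t,a_t)\big)$ drops out of $R_T$ in expectation. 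It then remains to bound $\Ec\big[\sum_{t=1}^T\big(c(\bar b_t,a_t)-J(\theta_*)\big)\big]$.

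\noindent For $t$ inside episode $k$ I would apply the Bellman equation \eqref{eq: bellman equation} for $\theta_k$ (which $a_t$ minimizes) together with $b_{t+1}=\tau(b_t,a_t,o_{t+1};\theta_k)$ to obtain the identity
\begin{align*}
c(\bar b_t,a_t)-J(\theta_*)
&=\big(v(b_t;\theta_k)-v(b_{t+1};\theta_k)\big)+\big(J(\theta_k)-J(\theta_*)\big)+\big(c(\bar b_t,a_t)-c(b_t,a_t)\big)\\
&\qquad+\Big(v(b_{t+1};\theta_k)-\sum_{o\in O}P(o\mid b_t,a_t;\theta_k)\,v\big(\tau(b_t,a_t,o;\theta_k);\theta_k\big)\Big).
\end{align*}
Summed over an episode, the first bracket telescopes to $v(b_{t_k};\theta_k)-v(b_{t_{k+1}};\theta_k)\le\spn(\theta_k)\le H$, so over all episodes it contributes at most $HK_T\le H\log_2 T$ --- this is the source of the $H\log T$ term. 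The last bracket I would split as $\delta_t+\xi_t$, where $\xi_t$ is conditional‑mean‑zero (the true law of $o_{t+1}$ given $\calF_t$ is $P(\cdot\mid\bar b_t,a_t;\theta_*)$) and hence drops out in expectation, and $|\delta_t|\le H\,\|P(\cdot\mid\bar b_t,a_t;\theta_*)-P(\cdot\mid b_t,a_t;\theta_k)\|_1$.

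\noindent The crux is then that on the event $\{\theta_k=\theta_*\}$ one has $b_t=\bar b_t$, so the belief‑mismatch term $c(\bar b_t,a_t)-c(b_t,a_t)$ and the bias $\delta_t$ both vanish, while off this event each is $O(H)$ per step; hence the expected episode‑$k$ contribution of these two terms is at most $O(H)\,T_k\,\p(\theta_k\ne\theta_*\mid\calF_{t_k})=O(H)\,T_k\,\big(1-f_{t_k}(\theta_*)\big)$, using $\theta_k\sim f_{t_k}$. The same identity gives $\Ec[J(\theta_k)-J(\theta_*)\mid\calF_{t_k}]=\sum_\theta f_{t_k}(\theta)\big(J(\theta)-J(\theta_*)\big)\le 1-f_{t_k}(\theta_*)$ (costs, hence $J$, lie in $[0,1]$), contributing at most $T_k\,(1-f_{t_k}(\theta_*))$ per episode. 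I would then invoke Lemma~\ref{lem: 1-ft} --- this is where Assumption~\ref{ass: kl divergence} enters, via a likelihood‑ratio/submartingale argument --- to get $\Ec[1-f_{t_k}(\theta_*)]\le e^{-\beta t_k}$ for a universal $\beta>0$; since $T_k=2^k$ and $t_k=2^k-1$, $\sum_{k\ge1}T_k\,\Ec[1-f_{t_k}(\theta_*)]\le\sum_{k\ge1}2^k e^{-\beta(2^k-1)}\le e^{\beta}\sum_{m\ge1}m\,e^{-\beta m}=1/(e^{-\beta}-1)^2$, and collecting the $O(H)$ per‑step factors bounds the whole "wrong‑model'' contribution by $4(H+1)/(e^{-\beta}-1)^2$, which yields the theorem.

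\noindent I expect the main obstacle to be the per‑episode error analysis of the two "wrong‑model'' terms --- verifying that they vanish exactly on $\{\theta_k=\theta_*\}$ and are otherwise governed purely by the posterior mass $1-f_{t_k}(\theta_*)$, which is what makes them summable against the doubling episode lengths --- together with establishing Lemma~\ref{lem: 1-ft} itself (the conditional, $\theta_*$‑uniform, geometric concentration of $f_t$ with an explicit rate $\beta$ extracted from the KL‑separation Assumption~\ref{ass: kl divergence}); the remaining steps are routine bookkeeping.
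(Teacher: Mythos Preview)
Your proposal is correct and follows essentially the same route as the paper: the same belief-level Bellman decomposition (the paper's Lemma~\ref{lem: decomposition}) into a telescoping span term, a $J(\theta_k)-J(\theta_*)$ term, a belief-mismatch cost term, and an observation-law mismatch term; the same observation that all non-telescoping terms vanish on $\{\theta_k=\theta_*\}$ and are $O(H)$ per step otherwise; and the same appeal to Lemma~\ref{lem: 1-ft} together with the identity $\pc(\theta_k\neq\theta_*\mid\calF_{t_k})=1-f_{t_k}(\theta_*)$ to sum $\sum_k T_k e^{-\beta t_k}$ against the doubling schedule. The only cosmetic differences are that the paper bounds the observation-law mismatch via the coarser quantity $\sum_{s'}|\theta_*-\theta_k|+\sum_s|h_t(\cdot;\theta_*)-h_t(\cdot;\theta_k)|$ rather than $\|P(\cdot\mid\bar b_t,a_t;\theta_*)-P(\cdot\mid b_t,a_t;\theta_k)\|_1$ directly, and it evaluates the final series via the derivative of a finite geometric sum rather than $\sum_{m\ge1}m e^{-\beta m}$; note also that Lemma~\ref{lem: 1-ft} carries a multiplicative constant $\alpha>1$ which you (and, in its final display, the paper) silently absorb.
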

Observe that with $\sched(t_k, T_{k-1}) = 2t_k$ and $\tilde m_t(s, a) = t$, the two stopping criteria in Algorithm~\ref{alg: posterior sampling} coincide and ensure that $T_k = 2T_{k-1}$ with $T_0 = 1$. In other words, the length of episodes grows exponentially as $T_k = 2^k$.

\subsection{Proof of Theorem~\ref{thm: finite parameter regret}}
In this section, proof of Theorem~\ref{thm: finite parameter regret} is provided. A key factor in achieving $\order(H\log T)$ regret bound in the case of finite parameters is that the posterior distribution $f_t(\cdot)$ concentrates on the true $\theta_*$ exponentially fast. 
\begin{lemma}
\label{lem: 1-ft}
Suppose Assumption~\ref{ass: kl divergence} holds. Then, there exist constants $\alpha > 1$ and $\beta > 0$ such that
\begin{align*}
\E[1 - f_t(\theta_*) | \theta_*] \leq \alpha \exp(-\beta t).
\end{align*}
\end{lemma}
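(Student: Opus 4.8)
The plan is to control the evolution of the posterior mass $f_t(\theta_*)$ on the true parameter by relating it to a likelihood ratio, and then to use Assumption~\ref{ass: kl divergence} to show this likelihood ratio decays exponentially under $\p_{\theta_*}$. First I would write out the posterior update rule \eqref{eq: update rule of ft and ht} in multiplicative form: since the denominator is common to all $\theta$, for any $\theta \in \Theta$ we have $f_{t+1}(\theta)/f_{t+1}(\theta_*) = \big(f_t(\theta)/f_t(\theta_*)\big)\cdot \big(\vtheta(o_{t+1})/\nu_{\theta_*}^{o_{1:t},a_{1:t}}(o_{t+1})\big)$, because $\sum_{s,s'}\eta(o_{t+1}|s')\theta(s'|s,a_t)h_t(s;\theta)$ is precisely $\vtheta(o_{t+1})$ by definition of $\vtheta$ and the belief recursion. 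Iterating from $t=1$, the ratio $f_t(\theta)/f_t(\theta_*)$ equals $\big(f_1(\theta)/f_1(\theta_*)\big)$ times the product of per-step likelihood ratios $\prod_{\tau=1}^{t-1}\vthetatau(o_\tau)/\vthetastartau(o_\tau)$ — in other words, $f_t(\theta)/f_t(\theta_*)$ is (up to the prior ratio and the first-observation correction) exactly the likelihood ratio of the observation sequence under $\theta$ versus $\theta_*$.

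Next I would analyze $L_t(\theta) := \prod_{\tau} \vthetatau(o_\tau)/\vthetastartau(o_\tau)$. Taking $\log L_t(\theta) = \sum_\tau \log\big(\vthetatau(o_\tau)/\vthetastartau(o_\tau)\big)$, this is a sum of terms whose conditional expectation given $\calF_{\tau}$ (equivalently given $o_{1:\tau-1},a_{1:\tau-1}$, under $\p_{\theta_*}$) is $-\calK(\vthetastartau \,\|\, \vthetatau) \le -\epsilon$ by Assumption~\ref{ass: kl divergence}. So $\log L_t(\theta)$ is a supermartingale with per-step negative drift at least $\epsilon$; writing $\log L_t(\theta) = M_t - D_t$ where $M_t$ is a martingale (the centered sum) and $D_t \ge \epsilon(t-1)$ is the accumulated KL drift, and noting the increments of $M_t$ are bounded (the log-likelihood ratios are bounded since $\Theta$ is finite and, one expects, the observation probabilities are bounded away from $0$ — this may need the mild assumption that $\vtheta(o) > 0$ whenever reachable, which I would state or absorb into the constants), a standard Azuma/Bernstein-type bound gives $\p_{\theta_*}(\log L_t(\theta) \ge -\tfrac{\epsilon}{2}(t-1)) \le \exp(-c\epsilon^2 t)$ for some constant $c$. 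Hence $\E[L_t(\theta)\mid\theta_*]$, or more directly the probability that $L_t(\theta)$ is not exponentially small, is itself exponentially small; summing the prior-weighted ratios over the finitely many $\theta \ne \theta_*$ gives $\E[1-f_t(\theta_*)\mid\theta_*] = \E\big[\sum_{\theta\ne\theta_*} f_t(\theta)/f_t(\theta_*) \cdot f_t(\theta_*)\mid\theta_*\big] \le \E\big[\sum_{\theta\ne\theta_*} f_t(\theta)/f_t(\theta_*)\mid\theta_*\big]$, which is bounded by $\alpha\exp(-\beta t)$ after choosing $\beta$ proportional to $\epsilon$ (or $\epsilon^2$) and $\alpha$ absorbing $|\Theta|$ and the worst prior ratio $\max_\theta f_1(\theta)/f_1(\theta_*)$.

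The main obstacle I anticipate is handling the boundedness of the per-step log-likelihood-ratio increments: the drift argument needs $\vthetatau(o_\tau)$ to stay bounded away from zero (uniformly over $\tau$, over histories, and over the finitely many parameters) so that the martingale increments are bounded and a concentration inequality applies cleanly; if the observation kernel $\eta$ or some transition entries can be arbitrarily small this requires care, and one route is to note that Assumption~\ref{ass: kl divergence} together with finiteness of $\Theta$ already forces some uniform control, or else to add this as an implicit regularity condition. A secondary technical point is that the per-step KL terms $\calK(\vthetastartau\|\vthetatau)$ are conditional on the realized history, which is exactly the form in which Assumption~\ref{ass: kl divergence} is stated, so the supermartingale decomposition goes through directly without needing to worry about the adaptivity of the history — that is the reason the assumption was phrased for "any history $o_{1:t}, a_{1:t}$." Everything else is routine: a union bound over $\theta \in \Theta\setminus\{\theta_*\}$, and bookkeeping of constants to match the clean statement $\alpha\exp(-\beta t)$.
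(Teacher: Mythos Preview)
Your overall plan is essentially the same as the paper's: write $f_t(\theta)/f_t(\theta_*)$ as a prior ratio times the likelihood ratio $L_t(\theta)=\prod_\tau \vthetatau(o_\tau)/\vthetastartau(o_\tau)$, decompose $\log L_t(\theta)$ into a bounded-increment martingale plus a predictable drift, use Assumption~\ref{ass: kl divergence} to lower bound the drift by $\epsilon$ per step, and apply Azuma plus a union bound over the finitely many $\theta\neq\theta_*$. The boundedness caveat you flag is also exactly what the paper silently assumes (it simply writes ``$|\log\Lambda_\tau^\theta-\E[\log\Lambda_\tau^\theta\mid\calF_{\tau-1},\theta_*]|\le d$ for some $d>0$'').

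There is, however, a genuine gap in your final step. You write
\[
\E\big[1-f_t(\theta_*)\mid\theta_*\big]\;\le\;\E\Big[\sum_{\theta\ne\theta_*} \tfrac{f_t(\theta)}{f_t(\theta_*)}\,\Big|\,\theta_*\Big]
\]
and then claim the right-hand side is $\alpha\exp(-\beta t)$. But $f_t(\theta)/f_t(\theta_*)$ equals (prior ratio)$\cdot L_t(\theta)$, and under $\pc$ the process $L_t(\theta)$ is a nonnegative martingale with $\E[L_t(\theta)\mid\theta_*]=1$ for every $t$: conditioning on the history, $\E[\vthetatau(o_\tau)/\vthetastartau(o_\tau)\mid\calF_{\tau-1},a_{\tau-1},\theta_*]=\sum_o\vthetatau(o)=1$. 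So the expectation you wrote is a constant and does not decay. Although $L_t(\theta)$ is exponentially small with high probability, its rare large values keep the mean at $1$, and dropping the factor $f_t(\theta_*)\le 1$ is precisely what makes the bound useless.

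The fix, which is what the paper does, is to keep the event-splitting you hint at but never actually use. Let $B_t$ be the event (from Azuma plus union bound) on which $L_t(\theta)\le e^{-(\epsilon-\delta)t}$ simultaneously for all $\theta\ne\theta_*$; then on $B_t$ one has $1-f_t(\theta_*)\le\sum_{\theta\ne\theta_*}\tfrac{f(\theta)}{f(\theta_*)}e^{-(\epsilon-\delta)t}$ deterministically, while on $B_t^c$ one uses the trivial bound $1-f_t(\theta_*)\le 1$ together with $\pc(B_t^c)\le 2(|\Theta|-1)e^{-\delta^2 t/(2d^2)}$. Adding the two pieces gives the claimed $\alpha e^{-\beta t}$ with $\beta=\min\{\epsilon-\delta,\,\delta^2/(2d^2)\}$. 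Everything else in your proposal is correct and aligns with the paper.
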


Equipped with this lemma, we are now ready to prove Theorem \ref{thm: finite parameter regret}.
\begin{proof}
Note that the regret $R_T$ can be decomposed as $R_T = H\Ec[K_T] + R_1 + R_2 + R_3$, where
\begin{align*}
R_1 &:= \Ec\sbr{\sum_{k=1}^{K_T}T_k\Big[J(\theta_k) - J(\theta_*)\Big]}, \\
R_2 &:= H\Ec\sbr{\sum_{k=1}^{K_T}\sum_{t=t_k}^{t_{k+1}-1}\sbr{\sum_{s'}\abr{\theta_*(s'|s_t, a_t) - \theta_k(s'|s_t, a_t)} + \sum_s \abr{h_t(s;\theta_*) - h_t(s;\theta_k)}}}, \\
R_3 &:= \Ec\sbr{\sum_{k=1}^{K_T}\sum_{t=t_k}^{t_{k+1}-1}\Big[c(h_t(\cdot;\theta_*), a_t) - c(h_t(\cdot;\theta_k), a_t)\Big]}.
\end{align*}
Note that the start time and length of episodes in Algorithm~\ref{alg: posterior sampling} are deterministic with the choice of $\sched$ and $\tilde m_t$ in the statement of the theorem, i.e., $t_k$, $T_k$ and hence $K_T$ are deterministic. Note that if $\theta_k = \theta_*$, then $R_1 = R_2 = R_3 = 0$. Moreover, we have that $J(\theta_k) - J(\theta_*) \leq 1$, $\sum_{s'}\abr{\theta_*(s'|s_t, a_t) - \theta_k(s'|s_t, a_t)} \leq 2$, $\sum_s \abr{h_t(s;\theta_*) - h_t(s;\theta_k)} \leq 2$, and $c(h_t(\cdot;\theta_*), a_t) - c(h_t(\cdot;\theta_k), a_t) \leq 1$. Therefore,
\begin{align*}
R_1 &:= \Ec\sbr{\sum_{k=1}^{K_T}T_k\1(\theta_k \neq \theta_*)} = \sum_{k=1}^{K_T}T_k \pc(\theta_k \neq \theta_*), \\
R_2 &:= 4H\Ec\sbr{\sum_{k=1}^{K_T}\sum_{t=t_k}^{t_{k+1}-1}\1(\theta_k \neq \theta_*)} = 4H\sum_{k=1}^{K_T}T_k \pc(\theta_k \neq \theta_*), \\
R_3 &:= \Ec\sbr{\sum_{k=1}^{K_T}\sum_{t=t_k}^{t_{k+1}-1}\1(\theta_k \neq \theta_*)} = \sum_{k=1}^{K_T}T_k \pc(\theta_k \neq \theta_*).
\end{align*}
Note that $\pc(\theta_k \neq \theta_*) = \Ec[1 - f_{t_k}(\theta_*)] \leq \alpha \exp(-\beta t_k)$ by Lemma~\ref{lem: 1-ft}. Combining all these bounds, we can write
\begin{align*}
R_T \leq HK_T + (4H+2)\alpha \sum_{k=1}^{K_T}T_k \exp(-\beta t_k).
\end{align*}
With the episode schedule provided in the statement of the theorem, it is easy to check that $K_T = O(\log T)$. 
Let $n = 2^{K_T}$ and write
\begin{align*}
\sum_{k=1}^{K_T}T_k \exp(-\beta t_k) = \sum_{k=1}^{K_T}2^k e^{-\beta (2^k-1)} \leq \sum_{j=2}^{n}je^{-\beta (j-1)} = \frac{d}{dx} \frac{x^{n+1}-1}{x-1} \Big|_{x = e^{-\beta}}-1.
\end{align*}
The last equality is by geometric series. Simplifying the derivative yields
\begin{align*}
\frac{d}{dx} \frac{x^{n+1}-1}{x-1} \Big|_{x = e^{-\beta}} &= \frac{nx^{n+1} - (n+1)x^n + 1}{(x-1)^2}\Big|_{x = e^{-\beta}} \leq \frac{nx^{n} - (n+1)x^n + 1}{(x-1)^2}\Big|_{x = e^{-\beta}} \\
&= \frac{-x^n + 1}{(x-1)^2}\Big|_{x = e^{-\beta}} \leq \frac{2}{(e^{-\beta}-1)^2}.
\end{align*}
Substituting these values implies $R_T \leq H\log T + \frac{4(H+1)}{(e^{-\beta}-1)^2}$.
\end{proof}

\section{General Case ($|\Theta| = \infty$)}
\label{sec: general}

We now consider the general case, where the parameter set is infinite, and in particular, $\Theta = \Theta_H$, an uncountable set. 
We make the following two technical assumptions on the belief and the transition kernel.
\begin{assumption}
\label{ass: concentrating belief}
Denote by $k(t)$ the episode at time $t$. The true conditional belief $h_t(\cdot;\theta_*)$ and the approximate conditional belief $h_t(\cdot;\theta_{k(t)})$ satisfy
\begin{equation}
\label{eq: ass concentrating belief}
\E\Big[\sum_{s}\big|h_t(s;\theta_*) - h_t(s;\theta_{k(t)})\big|\Big] \leq \frac{K_1(|\calS|, |\calA|, |O|, \iota)}{\sqrt{t_{k(t)}}},
\end{equation}
with probability at least $1 - \delta$, for any $\delta \in (0, 1)$. Here $K_1(|\calS|, |\calA|, |O|, \iota)$ is a constant that is polynomial in its input parameters and $\iota$ hides the logarithmic dependency on $|\calS|, |\calA|, |O|, T, \delta$.
\end{assumption}
Assumption~\ref{ass: concentrating belief} states that the gap between conditional posterior function for the sampled POMDP $\theta_k$ and the true POMDP $\theta_*$ decreases with episodes as better approximation of the true POMDP is available. 
There has been recent work on computation of approximate information states as required in Assumption \ref{ass: concentrating belief} \citep{subramanian2020approximate}.

\begin{assumption}
\label{ass: concentration}
There exists an $\calF_{t}$-measurable estimator $\hat \theta_t : \calS \times \calA \to \Delta_\calS$ such that
\begin{align}
\label{eq: ass concentration}
\sum_{s'}|\theta_*(s'|s, a) - \hat \theta_t(s' | s, a)| \leq \frac{K_2(|\calS|, |\calA|, |O|, \iota)}{\sqrt{\max\{1, {\tilde m}_{t}(s, a)\}}}
\end{align}
with probability at least $1 - \delta$, for any $\delta \in (0, 1)$, uniformly for all $t=1, 2, 3, \cdots, T$, where $K_2(|\calS|, |\calA|, |O|, \iota)$ is a constant that is polynomial in its input parameters and $\iota$ hides the logarithmic dependency on $|\calS|, |\calA|, |O|, T, \delta$.
\end{assumption}
There has been extensive work on estimation of transition dynamics of MDPs, e.g., \citep{grunewalder2012modelling}. Two examples where Assumptions~\ref{ass: concentrating belief} and \ref{ass: concentration} hold are:
\begin{itemize}
\item \textbf{Perfect observation.} In the case of perfect observation, where $h_t(s;\theta) = \1(s_t=s)$, Assumption~\ref{ass: concentrating belief} is clearly satisfied. Moreover, with perfect observation, one can choose $\tilde{m}_{t}(s, a) = n_{t}(s, a)$ and select  $\hat \theta_k(s'|s, a) = \frac{n_{t}(s, a, s')}{n_{t}(s, a)}$ to satisfy Assumption~\ref{ass: concentration} \citep{jaksch2010near,ouyang2017learning}. Here $n_{t}(s, a, s')$ denotes the number of visits to $s, a$ such that the next state is $s'$ before time $t$.
\item \textbf{Finite-parameter case.} In the finite-parameter case with the choice of $\tilde m_t(s, a) = t$ for all state-action pairs $(s, a)$ and $\sched(t_k, T_{k-1}) = t_k + T_{k-1}$ or $\sched(t_k, T_{k=1}) = 2t_k$, both of the assumptions are satisfied (see Lemma~\ref{lem: assumptions satisfied for finite-parameter} for details). Note that in this case a more refined analysis is performed in Section~\ref{sec: finite} to achieve $\order(H\log T)$ regret bound.
\end{itemize}
Now, we state the main result of this section.
\begin{theorem}
\label{thm: regret bound}
Under Assumptions~\ref{ass: bellman equation}, \ref{ass: concentrating belief} and \ref{ass: concentration}, running \texttt{PSRL-POMDP} algorithm with $\sched(t_k, T_{k-1}) = t_k + T_{k-1}$ yields $\E[R_T] \leq \otil(HK_2(|\calS||\calA|T)^{2/3})$, where $K_2 := K_2(|\calS|, |\calA|, |O|, \iota)$ in Assumption~\ref{ass: concentration}.
\end{theorem}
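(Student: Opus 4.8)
The plan is to run the Thompson‑sampling regret analysis of \texttt{TSDE} \citep{ouyang2017learning} in belief space, with every occurrence of the true visit count $n_t$ replaced by the pseudo‑count $\tilde m_t$, and to absorb the unobservability of the state through Assumptions~\ref{ass: concentrating belief} and \ref{ass: concentration}. First I would fix a confidence level $\delta$ and condition on the event $\calE$ that both concentration bounds hold for all $t\le T$; since $\p(\calE^c)\le 2\delta$ and per‑step costs lie in $[0,1]$, choosing $\delta=1/T$ makes the regret incurred off $\calE$ a constant, so it suffices to bound $\E[R_T\1_\calE]$. By the tower rule, $\E[\sum_t C(s_t,a_t)]=\E[\sum_t c(h_t(\cdot;\theta_*),a_t)]$ because $\E[C(s_t,a_t)\mid\calF_t,\theta_*]=c(h_t(\cdot;\theta_*),a_t)$, hence $\E[R_T]=\E\big[\sum_t(c(h_t(\cdot;\theta_*),a_t)-J(\theta_*))\big]$. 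Writing $b_t:=h_t(\cdot;\theta_{k(t)})$ and $a_t=\pi^\star(b_t;\theta_{k(t)})$, I would substitute the Bellman equation \eqref{eq: bellman equation} for $\theta_{k(t)}$ at the belief $b_t$ and split $R_T$ into: (i) the martingale $\sum_t(C(s_t,a_t)-c(h_t(\cdot;\theta_*),a_t))$; (ii) $\sum_{k=1}^{K_T}T_k(J(\theta_k)-J(\theta_*))$; (iii) a telescoping term in $v(\cdot;\theta_k)$ together with the one‑step prediction martingale; and (iv) a deviation term bounding, up to the span $H$, the mismatch between $\theta_{k(t)}$ and $\theta_*$ in the transition kernel, in the belief, and in the one‑step cost. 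This is the belief‑space analogue of the decomposition $R_T=H\Ec[K_T]+R_1+R_2+R_3$ used in the finite case.

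For the routine pieces: (i) is $\order(\sqrt{T\log T})$ by Azuma--Hoeffding. For (ii) I would apply the posterior‑sampling identity --- $t_k$ is a stopping time and $\theta_k\sim f_{t_k}$, so $\E[g\,J(\theta_k)]=\E[g\,J(\theta_*)]$ for every $\calF_{t_k}$‑measurable $g$ --- episodewise with $g=T_{k-1}+1$, which is legitimate since the schedule $\sched(t_k,T_{k-1})=t_k+T_{k-1}$ forces $T_k\le T_{k-1}+1$; because $|J(\theta_k)-J(\theta_*)|\le1$ this bounds (ii) by $\E[\sum_k(T_{k-1}+1-T_k)]\le\E[K_T]+1$. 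Piece (iii) telescopes inside each episode to at most $\spn(\theta_k)\le H$, with a martingale leftover, giving $\order(H\,\E[K_T]+H\sqrt{T\log T})$. For the number of episodes I would invoke Lemma~\ref{lem: number of episodes}: the slow‑growth criterion alone limits the episodes it triggers to $\order(\sqrt{|\calS||\calA|T})$, and since $\tilde m_t(s,a)$ is integer‑valued, non‑decreasing and at most $t$, the pseudo‑count doubling criterion fires at most $\log_2T$ times per pair, adding only $\order(|\calS||\calA|\log T)$; hence $\E[K_T]=\otil(\sqrt{|\calS||\calA|T})$. In (iv), the belief‑ and cost‑mismatch parts are handled by Assumption~\ref{ass: concentrating belief} and $|c(b,a)-c(b',a)|\le\sum_s|b(s)-b'(s)|$: on $\calE$ they sum to at most $H\sum_k T_k K_1/\sqrt{t_k}$, and since $T_{k-1}=\order(\sqrt{t_{k-1}})$ (from $T_j\le T_{j-1}+1$) one gets $t_{k(t)}\ge t/2$ for large $t$, whence $\sum_t K_1/\sqrt{t_{k(t)}}=\otil(K_1\sqrt T)$.

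The crux is the transition‑kernel part of (iv), namely $H\sum_t\sum_{s'}|\theta_*(s'|s_t,a_t)-\theta_{k(t)}(s'|s_t,a_t)|$. I would first transfer Assumption~\ref{ass: concentration} from $\theta_*$ to the sampled model: the event ``$\sum_{s'}|\theta_*(s'|s,a)-\hat\theta_{t_k}(s'|s,a)|\le K_2/\sqrt{\max\{1,\tilde m_{t_k}(s,a)\}}$ for all $(s,a)$'' is a deterministic function of $\theta_*$ and of $\calF_{t_k}$‑measurable data, so by the posterior‑sampling identity it also holds for $\theta_k$; combining with the triangle inequality and the in‑episode bound $\tilde m_t(s,a)\le 2\tilde m_{t_k}(s,a)$ gives $\sum_{s'}|\theta_*(s'|s_t,a_t)-\theta_{k(t)}(s'|s_t,a_t)|\le \order\!\big(K_2/\sqrt{\max\{1,\tilde m_{t_{k(t)}}(s_t,a_t)\}}\big)$ on $\calE$. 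It then remains to bound $\E\big[\sum_t 1/\sqrt{\max\{1,\tilde m_{t_{k(t)}}(s_t,a_t)\}}\big]$, and here the argument must depart from the observable‑MDP template, because the clean potential bound $\sum_t 1/\sqrt{n_t(s_t,a_t)}=\order(\sqrt{|\calS||\calA|T})$ is unavailable: $s_t$ is never observed and $\tilde m_t$ is only a running maximum of a conditional expectation of the true counts, not of the counts themselves. I would therefore fix a threshold $L$ and split --- steps with $\tilde m_{t_{k(t)}}(s_t,a_t)>L$ contribute at most $T/\sqrt L$, while steps with $\tilde m_{t_{k(t)}}(s_t,a_t)\le L$ (so $\tilde m_t(s_t,a_t)\le2L$) are counted using monotonicity of $\tilde m_t$ in $t$, the inequality $\tilde m_t(s,a)\ge\lceil\tilde n_t(s,a)\rceil$, and the identity $\E[\1(s_t=s)\mid\calF_t]=h_t(s;\theta_*)$ (turning the unobserved‑state indicator into a smoothed visit count, then applying optional stopping), which one expects to yield $\otil(|\calS||\calA|L)$ such steps, each costing $\order(H)$. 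Balancing $H|\calS||\calA|L$ against $HK_2T/\sqrt L$ gives $L\asymp(K_2T/(|\calS||\calA|))^{2/3}$ and a bound $\otil(HK_2(|\calS||\calA|T)^{2/3})$ on (iv), which dominates (i)--(iii) and yields the theorem.

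I expect the counting step in the last paragraph --- converting the algorithm's pseudo‑count $\tilde m_t$, assembled from conditional expectations of unobservable quantities, into an honest bound on the number of ``under‑visited'' time steps along the realized trajectory --- to be the main obstacle, and it is exactly this loss (rather than a clean $\sqrt{|\calS||\calA|T}$ potential) that degrades the rate to $T^{2/3}$. A secondary but pervasive source of care is the bookkeeping of the high‑probability events of Assumptions~\ref{ass: concentrating belief} and \ref{ass: concentration} (taking $\delta\asymp1/T$ and absorbing the off‑event regret), and ensuring that every invocation of the posterior‑sampling identity is applied only to quantities that are measurable at the episode start $t_k$.
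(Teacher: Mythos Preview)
Your overall architecture --- reduce to belief space, decompose into episode-cost/telescoping/model-mismatch pieces, transfer Assumption~\ref{ass: concentration} to $\theta_k$ via the posterior-sampling identity, and use the doubling rule to pass from $\tilde m_{t_k}$ to $\tilde m_t$ --- matches the paper's proof. Your handling of the $K_1$ terms is in fact cleaner than the paper's: the observation $t_{k(t)}>t/2$ (from $T_k\le T_{k-1}+1\le t_k$) gives $\sum_t 1/\sqrt{t_{k(t)}}=\order(\sqrt T)$ directly, whereas the paper goes through a macro-episode argument.

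The gap is precisely at the step you flag as the main obstacle. Your plan is to threshold at $L$, bound the number of ``under-visited'' time steps $\#\{t:\tilde m_t(s_t,a_t)\le 2L\}$ by $\otil(|\calS||\calA|L)$ using optional stopping, and balance. But this count does not obviously admit such a bound with the tools you list. Writing $\tau_{s,a}=\min\{t:\tilde m_t(s,a)>2L\}$, the under-visited count for $(s,a)$ equals $n_{\tau_{s,a}}(s,a)$, and optional stopping gives $\E[n_{\tau_{s,a}}]=\E[\tilde n_{\tau_{s,a}}]$; however $\tilde n_{\tau_{s,a}}$ is only known to \emph{exceed} $2L$ at time $\tau_{s,a}$, not to be close to $2L$. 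The one-step jump $\tilde n_{\tau}-\tilde n_{\tau-1}$ contains the martingale increment $\E[n_{\tau-1}\mid\calF_{(\tau-1)+}]-\E[n_{\tau-1}\mid\calF_{(\tau-2)+}]$, which can be of order $\tau$ pathwise, and $\tau$ is adapted to this very jump; so the overshoot is not controlled by the ingredients you cite. I do not see how to close this without an additional idea.

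The paper takes a different route that sidesteps the counting problem entirely. Its key lemma is a Markov-type inequality relating the pseudo-count to the \emph{true} count: for any $\alpha\in[0,1]$ and any $(s,a)$,
\[
\p\big(\tilde m_t(s,a)<\alpha\, n_t(s,a)\big)\le\alpha,
\]
proved by the one-line observation $\tilde n_t\,\1(\alpha n_t>\tilde n_t)\le \alpha n_t$, conditioning on $\calF_{(t-1)+}$, and handling $\{\tilde n_t=0\}$ separately. Splitting on this event, on the good part one has $1/\sqrt{\tilde m_t(s_t,a_t)}\le 1/\sqrt{\alpha\, n_t(s_t,a_t)}$ and the \emph{standard} observable-MDP potential $\sum_t 1/\sqrt{\max\{1,n_t(s_t,a_t)\}}\le 3\sqrt{|\calS||\calA|T}$ applies verbatim (so your premise that this potential is ``unavailable'' is too pessimistic); on the bad part one simply uses $1/\sqrt{\tilde m_t}\le 1$ and pays $K_2|\calS||\calA|T\alpha$. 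Optimizing $\alpha\asymp(|\calS||\calA|T)^{-1/3}$ yields the $HK_2(|\calS||\calA|T)^{2/3}$ term. In short, the missing idea is not a direct count of under-visited steps but this probabilistic comparison of $\tilde m_t$ with $n_t$, which lets you import the fully-observable potential argument at the cost of the extra $\alpha$ that degrades the rate to $T^{2/3}$.
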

The exact constants are known (see proof and Appendix~\ref{app: full upper bound}) though we have hidden the dependence above.

\subsection{Proof Sketch of Theorem~\ref{thm: regret bound}}
We provide the proof sketch of Theorem~\ref{thm: regret bound} here. 
A key property of posterior sampling is that conditioned on the information at time $t$, the sampled $\theta_t$ and the true $\theta_*$ have the same distribution \citep{osband2013more,russo2014learning}. Since the episode start time $t_k$ is a stopping time with respect to the filtration $(\calF_t)_{t \geq 1}$, we use a stopping time version of this property:
\begin{lemma}[Lemma 2 in \cite{ouyang2017learning}]
\label{lem: property of posterior sampling}
For any measurable function $g$ and any $\calF_{t_k}$-measurable random variable $X$, we have $\E[g(\theta_k, X)] = \E[g(\theta_*, X)]$.
\end{lemma}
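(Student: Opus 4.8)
The plan is to exploit the defining property of the posterior: by construction in \eqref{eq: update rule of ft and ht}, the measure $f_n(\cdot)$ is precisely the conditional law of $\theta_*$ given $\calF_n$, whereas $\theta_k$ is drawn from $f_{t_k}$ using randomness that is exogenous to the history. At a \emph{deterministic} time the claim is therefore immediate, and the only genuine work is transferring it to the random stopping time $t_k$ by decomposing over the disjoint, $\calF_n$-measurable events $\{t_k = n\}$.

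First I would record the deterministic-time version. Fix $n$ and let $X_n$ be any $\calF_n$-measurable random variable. By definition of the posterior, conditioned on $\calF_n$ the true parameter $\theta_*$ has law $f_n$; and since $\theta_k$, when an episode begins at time $n$, is sampled from $f_n$ using randomness independent of $\calF_n$, it has the same conditional law. Hence
\begin{align*}
\E[g(\theta_k, X_n) \mid \calF_n] = \int g(\theta, X_n) f_n(\theta)\, d\theta = \E[g(\theta_*, X_n) \mid \calF_n].
\end{align*}

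Next I would handle the random index. Because $t_k$ is a stopping time for $(\calF_t)_{t \geq 1}$, the event $\{t_k = n\}$ lies in $\calF_n$, and on this event the $\calF_{t_k}$-measurable variable $X$ coincides with some $\calF_n$-measurable $X_n$. Decomposing $\E[g(\theta_k, X)] = \sum_n \E[g(\theta_k, X)\,\1(t_k = n)]$, and conditioning each term on $\calF_n$ using that $\1(t_k = n)$ and $X_n$ are $\calF_n$-measurable, gives
\begin{align*}
\E\!\left[g(\theta_k, X)\,\1(t_k = n)\right]
= \E\!\left[\1(t_k = n)\, \E[g(\theta_k, X_n)\mid \calF_n]\right]
= \E\!\left[g(\theta_*, X)\,\1(t_k = n)\right],
\end{align*}
where the middle equality is the tower property and the last one invokes the displayed deterministic-time identity together with $X = X_n$ on $\{t_k = n\}$. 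Summing over $n$ yields $\E[g(\theta_k, X)] = \E[g(\theta_*, X)]$.

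The main obstacle is the measurability bookkeeping around the stopping time: one must justify that the sampling of $\theta_k$ is realized through exogenous seeds so that $\theta_k \mid \calF_n \sim f_n$ uniformly in $n$, and that $X$ restricted to $\{t_k = n\}$ is genuinely $\calF_n$-measurable, which is exactly where the stopping-time structure (rather than an arbitrary random time) is used. The interchange of the expectation with the sum over $n$ is harmless here, since the state, action, and observation spaces are finite and $g$ is bounded, so dominated convergence applies.
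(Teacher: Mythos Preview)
Your argument is correct and is the standard proof of this posterior-sampling identity. Note, however, that the present paper does \emph{not} supply its own proof of this lemma: it is stated as Lemma~2 of \cite{ouyang2017learning} and invoked without proof. Your decomposition over the events $\{t_k=n\}$, use of the stopping-time property to push $\1(t_k=n)$ and $X$ into $\calF_n$, and the observation that $\theta_k$ and $\theta_*$ share the conditional law $f_n$ on that event, is exactly the argument in the cited reference. One small quibble: the lemma as stated allows arbitrary measurable $g$, not necessarily bounded, so your appeal to dominated convergence via boundedness of $g$ is not quite the right justification; it is cleaner to note that the events $\{t_k=n\}$ are disjoint with union of full measure, so the interchange is just countable additivity whenever $\E[|g(\theta_k,X)|]<\infty$ (which is implicit for the identity to be meaningful).
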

Introducing the pseudo count $\tilde m_t(s, a)$ in the algorithm requires a novel analysis to achieve a low regret bound. The following key lemma states that the pseudo count $\tilde m_{t}$ cannot be too smaller than the true count  $n_t$.
\begin{lemma}
\label{lem: pseudo count vs true count general case}
Fix a state-action pair $(s, a) \in \calS \times \calA$. For any pseudo count $\tilde m_t$ and any $\alpha \in [0, 1]$,
\begin{equation}
\p\big(\tilde m_{t}(s, a) < \alpha n_{t}(s, a)\big) \leq \alpha.
\end{equation}
\end{lemma}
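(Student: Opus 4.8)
The plan is to condition on the $\sigma$-algebra $\calF_{(t-1)+}$ and apply a conditional Markov inequality to the count $n_t(s,a)$. The key structural facts I will exploit are all contained in the definition of a pseudo count: $\tilde m_t(s,a)$ is $\calF_{(t-1)+}$-measurable, and $\tilde m_t(s,a) \ge \ceil{\tilde n_t(s,a)} \ge \tilde n_t(s,a) = \E[n_t(s,a)\mid \calF_{(t-1)+}]$. The measurability is what lets me treat $\tilde m_t(s,a)$ as a constant once we condition, and the inequality is what makes Markov's bound collapse to exactly $\alpha$.

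First I would dispose of the trivial case $\alpha = 0$: since $\tilde m_t(s,a) \ge 0$ and $n_t(s,a) \ge 0$, the event $\{\tilde m_t(s,a) < 0\}$ is empty, so the claim reads $0 \le 0$. Now assume $\alpha \in (0,1]$. Working conditionally on $\calF_{(t-1)+}$, write $m := \tilde m_t(s,a)$ (a $\calF_{(t-1)+}$-measurable, hence conditionally constant, nonnegative integer) and $\mu := \tilde n_t(s,a) = \E[n_t(s,a)\mid \calF_{(t-1)+}] \le m$. If $m = 0$ the event $\{\tilde m_t(s,a) < \alpha n_t(s,a)\}$ is $\{0 < \alpha n_t(s,a)\} = \{n_t(s,a) \ge 1\}$, and by the conditional Markov inequality $\p(n_t(s,a)\ge 1 \mid \calF_{(t-1)+}) \le \mu \le m = 0 \le \alpha$. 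If $m \ge 1$, the event is $\{n_t(s,a) > m/\alpha\}$, and since $n_t(s,a)$ is nonnegative and $m/\alpha > 0$, the conditional Markov inequality gives
\begin{align*}
\p\big(n_t(s,a) > m/\alpha \mid \calF_{(t-1)+}\big) \;\le\; \frac{\E[n_t(s,a)\mid \calF_{(t-1)+}]}{m/\alpha} \;=\; \frac{\alpha \mu}{m} \;\le\; \alpha,
\end{align*}
using $\mu \le m$ in the last step. In either case we obtain $\p\big(\tilde m_t(s,a) < \alpha n_t(s,a)\mid \calF_{(t-1)+}\big) \le \alpha$ almost surely.

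Finally I would take expectations of the conditional bound: by the tower property,
\begin{align*}
\p\big(\tilde m_t(s,a) < \alpha n_t(s,a)\big) \;=\; \E\Big[\p\big(\tilde m_t(s,a) < \alpha n_t(s,a)\mid \calF_{(t-1)+}\big)\Big] \;\le\; \E[\alpha] \;=\; \alpha,
\end{align*}
which is the claim. I do not anticipate a real obstacle here; the only care needed is the bookkeeping around the degenerate case $\tilde m_t(s,a) = 0$ (where dividing by $m/\alpha$ is illegitimate and one instead argues directly that the conditional mean being zero forces $n_t(s,a) = 0$ conditionally) and making sure the conditional Markov inequality is invoked with a genuinely nonnegative, $\calF_{(t-1)+}$-conditionally integrable random variable, which holds since $0 \le n_t(s,a) \le t-1$.
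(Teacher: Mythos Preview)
Your proof is correct and follows essentially the same approach as the paper: a conditional Markov inequality on $n_t(s,a)$ given $\calF_{(t-1)+}$, together with a separate treatment of the degenerate case where the (conditionally constant) comparison quantity is zero. The only cosmetic difference is that the paper first proves the bound for $\tilde n_t(s,a)$ and then invokes $\tilde m_t(s,a) \ge \tilde n_t(s,a)$, whereas you work directly with $\tilde m_t(s,a)$; the underlying argument is the same.
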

\begin{proof}
We show that $\p\big(\tilde n_{t}(s, a) < \alpha n_{t}(s, a)\big) \leq \alpha$. Since by definition $\tilde m_t(s, a) \geq \tilde n_t(s, a)$, the claim of the lemma follows. For any $\alpha \in [0, 1]$,
\begin{equation}
\tilde n_{t}(s, a) \1\big(\alpha n_{t}(s, a) > \tilde n_{t}(s, a)\big) \leq \alpha n_{t}(s, a).
\end{equation}
By taking conditional expectation with respect to $\calF_{{(t-1)}+}$ from both sides and the fact that $\E[n_{t}(s, a)|\calF_{{(t-1)}+}] = \tilde n_{t}(s, a)$, we have
\begin{equation}
\label{eq: ntilde indicator smaller than ntilde}
\tilde n_{t}(s, a) \E\Big[\1\big(\alpha n_{t}(s, a) > \tilde n_{t}(s, a)\big) \Big| \calF_{{(t-1)}+} \Big] \leq \alpha \tilde n_{t}(s, a).
\end{equation}
We claim that
\begin{equation}
\label{eq: indicator less than alpha}
\E\Big[\1\big(\alpha n_{t}(s, a) > \tilde n_{t}(s, a)\big) \Big| \calF_{{(t-1)}+}\Big] \leq \alpha,~~\text{a.s.}
\end{equation}
If this claim is true, taking another expectation from both sides completes the proof.

To prove the claim, let $\Omega_0, \Omega_+$ be the subsets of the sample space where $\tilde n_t(s, a) = 0$ and $\tilde n_t(s, a) > 0$, respectively. We consider these two cases separately: (a) on $\Omega_+$ one can divide both sides of \eqref{eq: ntilde indicator smaller than ntilde} by  $\tilde n_t(s, a)$ and reach \eqref{eq: indicator less than alpha}; (b) note that by definition $\tilde n_t(s, a) = 0$ on $\Omega_0$. Thus, $ n_t(s, a)\1(\Omega_0) = 0$ almost surely (this is because $\E[n_t(s, a)\1(\Omega_0)] = \E[\E[n_t(s, a)\1(\Omega_0)|\calF_{(t-1)+}]] = \E[\tilde n_t(s, a)\1(\Omega_0)] = 0$). Therefore,
\begin{align*}
\1(\Omega_0)\1\big(\alpha n_{t}(s, a) > \tilde n_{t}(s, a)\big) = 0,\quad\text{a.s.,}
\end{align*}
which implies
\begin{equation*}
\1(\Omega_0)\E\Big[\1\big(\alpha n_{t}(s, a) > \tilde n_{t}(s, a)\big) \Big| \calF_{{(t-1)}+}\Big] = 0,\quad\text{a.s.,}
\end{equation*}
which means on $\Omega_0$, the left hand side of \eqref{eq: indicator less than alpha} is indeed zero, almost surely, proving  the claim.
\end{proof}
The parameter $\alpha$ will be tuned later to balance two terms and achieve $\otil(T^{2/3})$ regret bound (see Lemma~\ref{lem: r2bar}). We are now ready to provide the proof sketch of Theorem~\ref{thm: regret bound}.

By Lemma~\ref{lem: decomposition}, $R_T$ can be decomposed as $R_T = H\Ec[K_T] + R_1 + R_2 + R_3$, where
\begin{align*}
R_1 &:= \Ec\sbr{\sum_{k=1}^{K_T}T_k\Big[J(\theta_k) - J(\theta_*)\Big]}, \\
R_2 &:= H\Ec\sbr{\sum_{k=1}^{K_T}\sum_{t=t_k}^{t_{k+1}-1}\sbr{\sum_{s'}\abr{\theta_*(s'|s_t, a_t) - \theta_k(s'|s_t, a_t)} + \sum_s \abr{h_t(s;\theta_*) - h_t(s;\theta_k)}}}, \\
R_3 &:= \Ec\sbr{\sum_{k=1}^{K_T}\sum_{t=t_k}^{t_{k+1}-1}\Big[c(h_t(\cdot;\theta_*), a_t) - c(h_t(\cdot;\theta_k), a_t)\Big]}.
\end{align*}
It follows from the first stopping criterion that $T_k \leq T_{k-1} + 1$. Using this along with the property of posterior sampling (Lemma~\ref{lem: property of posterior sampling}) proves that $\E[R_1] \leq \E[K_T]$ (see Lemma~\ref{lem: j minus jstar} for details). $\E[R_3]$ is bounded by $K_1\E\big[\sum_{k=1}^{K_T}\frac{T_k}{\sqrt{t_k}}\big] + 1$ where $K_1:= K_1(|\calS|, |\calA|, |O|, \iota)$ is the constant in Assumption~\ref{ass: concentrating belief} (see Lemma~\ref{lem: r3}). To bound $\E[R_2]$, we use Assumption~\ref{ass: concentrating belief} and follow the proof steps of Lemma~\ref{lem: r3} to conclude that
\begin{equation*}
\E[R_2] \leq \bar R_2 + HK_1 \E\Big[\sum_{k=1}^{K_T}\frac{T_k}{\sqrt{t_k}}\Big] + 1,
\end{equation*}
where
\begin{equation*}
\bar R_2 := H\E\Big[\sum_{k=1}^{K_T}\sum_{t=t_k}^{t_{k+1}-1}\sum_{s'}\Big|\theta_*(s' | s_t, a_t) - \theta_k(s'|s_t, a_t)\Big|\Big].
\end{equation*}
$\bar R_2$ is the dominating term in the final $\otil(T^{2/3})$ regret bound and can be bounded by $H + 12HK_2(|\calS||\calA|T)^{2/3}$ where $K_2 := K_2(|\calS|, |\calA|, |O|, \iota)$ is the constant in Assumption~\ref{ass: concentration}. The detailed proof can be found in Lemma \ref{lem: r2bar}. However, we sketch the main steps of the proof here. By Assumption~\ref{ass: concentration}, one can show that
\begin{equation*}
\bar R_2 \leq \otil \Big(\E\Big[\sum_{t=1}^{T}\frac{HK_2}{\sqrt{\max\{1, {\tilde m}_{t}(s_t, a_t)\}}}\Big]\Big).
\end{equation*}
Now, let $E_2$ be the event that $\tilde m_t(s, a) \geq \alpha n_t(s, a)$ for all $s, a$. Note that by Lemma~\ref{lem: pseudo count vs true count general case} and union bound, $\p(E_2^c) \leq |\calS||\calA|\alpha$. Thus,
\begin{align*}
&\bar R_2 \leq \otil \Big(\E\Big[\sum_{t=1}^{T}\frac{HK_2}{\sqrt{\max\{1, {\tilde m}_{t}(s_t, a_t)\}}}\big(\1(E_2) + \1(E_2^c)\big)\Big]\Big) \\
&\leq \otil \Big( H\E\Big[\sum_{t=1}^{T}\frac{K_2}{\sqrt{\alpha\max\{1, {n}_{t}(s_t, a_t)\}}}\Big] + HK_2 |\calS||\calA|T\alpha\Big)
\end{align*}
Algebraic manipulation of the inner summation yields $\bar R_2 \leq \otil \Big( HK_2\sqrt \frac{|\calS||\calA|T}{\alpha} + HK_2 |\calS||\calA|T\alpha\Big)$.

Optimizing over $\alpha$ implies $\bar R_2 = \otil(HK_2(|\calS||\calA|T)^{2/3})$. 
Substituting  upper bounds for $\E[R_1], \E[R_2]$ and $\E[R_3]$, we get
\begin{align*}
\E[R_T] &= H\E[K_T] + \E[R_1] + \E[R_2] + \E[R_3] \\
&\leq (1 + H)\E[K_T]  + 12HK_2(|\calS||\calA|T)^{2/3} + (H+1)K_1 \E\Big[\sum_{k=1}^{K_T}\frac{T_k}{\sqrt{t_k}}\Big] + 2 + H.
\end{align*}
From Lemma~\ref{lem: number of episodes}, we know that $\E[K_T] = \otil(\sqrt{|\calS||\calA|T})$ and  $\sum_{k=1}^{K_T}\frac{T_k}{\sqrt{t_k}} = \otil(|\calS||\calA|\sqrt{T})$. Therefore, $\E[R_T] \leq \otil(HK_2(|\calS||\calA|T)^{2/3}).$

\section{Conclusions}

In this paper, we have presented one of the first online reinforcement learning  algorithms for POMDPs. Solving POMDPs is a hard problem. Designing an efficient learning algorithm that achieves sublinear regret is even harder. We show that the proposed \texttt{PSRL-POMDP} algorithm achieves a Bayesian regret bound of $\order(\log T)$ when the parameter is finite. When the parameter set may be uncountable, we showed a $\otil (T^{2/3})$ regret bound under two technical assumptions on the belief state approximation and transition kernel estimation. 
There has been recent work that does approximate belief state computation, as well as estimates transition dynamics of continuous MDPs, and in future work, we will try to incorporate such estimators. We also assume that the observation kernel is known. Note that without it, it is very challenging to design online learning algorithms for POMDPs. Posterior sampling-based algorithms in general are known to have superior numerical performance as compared to OFU-based algorithms for bandits and MDPs. In future work, we will also do an experimental investigation of the proposed algorithm. An impediment is that available POMDP solvers mostly provide approximate solutions which would lead to linear regret. In the future, we will also try to improve the regret for the general case to $\otil(\sqrt{T})$.

\bibliographystyle{plainnat}
\bibliography{online_rl}


\newpage
\appendix

\section{Regret Decomposition}
\label{app: aux}

\begin{lemma}
\label{lem: decomposition}
$R_T$ can be decomposed as $R_T = H\Ec[K_T] + R_1 + R_2 + R_3$, where
\begin{align*}
R_1 &:= \Ec\sbr{\sum_{k=1}^{K_T}T_k\Big[J(\theta_k) - J(\theta_*)\Big]}, \\
R_2 &:= H\Ec\sbr{\sum_{k=1}^{K_T}\sum_{t=t_k}^{t_{k+1}-1}\sbr{\sum_{s'}\abr{\theta_*(s'|s_t, a_t) - \theta_k(s'|s_t, a_t)} + \sum_s \abr{h_t(s;\theta_*) - h_t(s;\theta_k)}}}, \\
R_3 &:= \Ec\sbr{\sum_{k=1}^{K_T}\sum_{t=t_k}^{t_{k+1}-1}\Big[c(h_t(\cdot;\theta_*), a_t) - c(h_t(\cdot;\theta_k), a_t)\Big]}.
\end{align*}
\end{lemma}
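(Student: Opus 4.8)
## Proof Proposal for Lemma~\ref{lem: decomposition}

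The plan is to start from the Bellman optimality equation for the sampled POMDP $\theta_k$ and telescope the relative value function along each episode, exactly as in the analysis of \texttt{TSDE} for MDPs but tracking the extra discrepancy that arises because the policy $\pi_k$ is applied to the belief $h_t(\cdot;\theta_k)$ rather than the true belief $h_t(\cdot;\theta_*)$, and because the true cost is generated by $\theta_*$. First I would fix an episode $k$ and write, for each $t$ with $t_k \le t \le t_{k+1}-1$, the Bellman equation \eqref{eq: bellman equation} evaluated at the belief $b = h_t(\cdot;\theta_k)$ under parameter $\theta_k$: since $\pi_k = \pi^*(\cdot;\theta_k)$ achieves the minimum,
\begin{equation*}
J(\theta_k) + v(h_t(\cdot;\theta_k);\theta_k) = c(h_t(\cdot;\theta_k), a_t) + \sum_{o \in O} P(o \mid h_t(\cdot;\theta_k), a_t; \theta_k)\, v(h_{t+1}^{\theta_k,o}(\cdot;\theta_k);\theta_k),
\end{equation*}
where $h_{t+1}^{\theta_k,o}$ denotes the one-step belief update under $\theta_k$ with hypothetical observation $o$. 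Rearranging gives $J(\theta_k) = c(h_t(\cdot;\theta_k),a_t) + [\text{one-step drift of } v] $, and summing over $t$ in the episode makes the $v$-terms telescope up to (i) a boundary term $v(h_{t_k}(\cdot;\theta_k);\theta_k) - v(h_{t_{k+1}}(\cdot;\theta_k);\theta_k)$, bounded in absolute value by $\spn(\theta_k) \le H$, and (ii) a martingale-difference-type term comparing $\sum_o P(o\mid\cdot;\theta_k) v(\cdot)$ to the realized next value, which — after taking $\Ec$ and using that the realized observation $o_{t+1}$ is drawn from the \emph{true} dynamics $\theta_*$ and that $s_t$ has true conditional law $h_t(\cdot;\theta_*)$ — contributes a term controlled by $\sum_{s'}|\theta_*(s'\mid s_t,a_t) - \theta_k(s'\mid s_t,a_t)|$ and $\sum_s |h_t(s;\theta_*) - h_t(s;\theta_k)|$, each scaled by $H = \spn$ since $v$ is bounded by $H$ and a change of transition law or of the starting belief perturbs the expected next value by at most $H$ times the corresponding $\ell_1$ distance.

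Next I would sum over episodes $k = 1,\dots,K_T$. The boundary terms accumulate to at most $H K_T$ in expectation, which is the $H\Ec[K_T]$ summand. The term $\sum_k T_k J(\theta_k)$ minus the benchmark $\sum_k T_k J(\theta_*) = \Ec[\sum_t J(\theta_*)]$ is precisely $R_1$. The realized-cost side of the identity is $\Ec[\sum_k \sum_t c(h_t(\cdot;\theta_k), a_t)]$, whereas the actual regret is measured against $\Ec[\sum_t C(s_t,a_t)] = \Ec[\sum_t c(h_t(\cdot;\theta_*),a_t)]$ (using the tower property: conditioned on $\calF_t$, $\E[C(s_t,a_t)\mid\calF_t] = \sum_s C(s,a_t) h_t(s;\theta_*) = c(h_t(\cdot;\theta_*),a_t)$); the difference $\Ec[\sum_k\sum_t (c(h_t(\cdot;\theta_*),a_t) - c(h_t(\cdot;\theta_k),a_t))]$ is exactly $R_3$. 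Finally, the transition/belief perturbation terms collected from step one are exactly $R_2$. Collecting everything yields $R_T = H\Ec[K_T] + R_1 + R_2 + R_3$.

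I expect the main obstacle to be the careful bookkeeping of the martingale/one-step-drift term: one must verify that after conditioning appropriately, the gap between $\sum_{o}P(o\mid h_t(\cdot;\theta_k),a_t;\theta_k)v(\cdot;\theta_k)$ and the true expected continuation value $\E[v(h_{t+1}(\cdot;\theta_k);\theta_k)\mid\calF_{t+}]$ decomposes cleanly into a mean-zero term (which vanishes in expectation) plus the two $\ell_1$ discrepancies, and that the mismatch between the belief-update map $\tau(\cdot,a_t,o_{t+1};\theta_k)$ applied along the realized trajectory versus the hypothetical-observation average is handled consistently — the subtlety is that $h_{t+1}(\cdot;\theta_k)$ in the algorithm is updated using the \emph{realized} $o_{t+1} \sim \eta(\cdot\mid s_{t+1})$ with $s_{t+1}\sim\theta_*(\cdot\mid s_t,a_t)$, not $\theta_k$. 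A clean way to sidestep this is to add and subtract $P(o\mid h_t(\cdot;\theta_*),a_t;\theta_k)$ and then $P(o\mid h_t(\cdot;\theta_*),a_t;\theta_*)$, bounding the first difference by the belief $\ell_1$ gap and the second by the transition $\ell_1$ gap, both times $\|v\|_\infty \le H$; the remaining term is a genuine $\calF_{t+}$-martingale difference and drops under $\Ec$. Everything else is the standard telescoping-and-summation argument, and the episode-count and step-schedule facts (which control $K_T$ and justify that all sums are over finitely many well-defined episodes) are invoked only downstream, not in this decomposition.
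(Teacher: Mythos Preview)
Your proposal is correct and follows essentially the same route as the paper: apply the Bellman equation for $\theta_k$ at $h_t(\cdot;\theta_k)$, telescope $v$ over each episode to produce the $H\Ec[K_T]$ boundary term, peel off $R_1$ and $R_3$, and handle the remaining drift by computing $\Ec[v(h_{t+1}(\cdot;\theta_k);\theta_k)\mid\calF_t,\theta_k]=\sum_o P(o\mid h_t(\cdot;\theta_*),a_t;\theta_*)\,v(h';\theta_k)$ and then inserting the intermediate $P(o\mid h_t(\cdot;\theta_*),a_t;\theta_k)$ exactly as you describe, together with the identity $\sum_s\theta(s'\mid s,a_t)h_t(s;\theta_*)=\Ec[\theta(s'\mid s_t,a_t)\mid\calF_t,\theta_k]$ to land on the $s_t$-based transition term. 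One minor point: both your argument and the paper's actually yield $R_T \le H\Ec[K_T]+R_1+R_2+R_3$ (the telescoped boundary is only $\le H$, and passing to absolute values to form $R_2$ is an inequality), so the ``$=$'' in the lemma statement should be read as an upper bound.
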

\begin{proof}
First, note that $\Ec[C(s_t, a_t)| \calF_{t+}] = c(h_t(\cdot; \theta_*), a_t)$ for any $t \geq 1$. Thus, we can write:
\begin{align*}
R_T &= \Ec\Big[\sum_{t=1}^T\Big[C(s_t, a_t) - J(\theta_*)\Big]\Big] = \Ec\Big[\sum_{t=1}^T\Big[c(h_t(\cdot; \theta_*), a_t) - J(\theta_*)\Big]\Big].
\end{align*}
During episode $k$, by the Bellman equation for the sampled POMDP $\theta_k$ and that $a_t = \pi^*(h_t(\cdot;\theta_k);\theta_k)$, we can write:
\begin{align*}
&c(h_t(\cdot;\theta_k), a_t) - J(\theta_k) = v(h_t(\cdot;\theta_k);\theta_k) - \sum_{o}P(o|h_t(\cdot;\theta_k), a_t;\theta_k)v(h';\theta_k),
\end{align*}
where $h' = \tau(h_t(\cdot;\theta_k), a_t, o;\theta_k)$.
Using this equation, we proceed by decomposing the regret as
\begin{align*}
R_T &= \Ec\Big[\sum_{t=1}^T\Big[c(h_t(\cdot;\theta_*), a_t) - J(\theta_*)\Big]\Big] \\
&= \Ec\Big[\sum_{k=1}^{K_T}\sum_{t=t_k}^{t_{k+1}-1}\Big[c(h_t(\cdot;\theta_*), a_t) - J(\theta_*)\Big]\Big] \\
&= \Ec\Big[\sum_{k=1}^{K_T}\underbrace{\sum_{t=t_k}^{t_{k+1}-1}\Big[v(h_t(\cdot;\theta_k); \theta_k) - v(h_{t+1}(\cdot;\theta_k); \theta_k)\Big]}_{\text{telescopic sum}}\Big] + \underbrace{\Ec\sbr{\sum_{k=1}^{K_T}T_k\Big[J(\theta_k) - J(\theta_*)\Big]}}_{=: R_1} \\
&\qquad + \underbrace{\Ec\Big[\sum_{k=1}^{K_T}\sum_{t=t_k}^{t_{k+1}-1}\Big[v(h_{t+1}(\cdot;\theta_k); \theta_k) - \sum_{o \in O}P(o | h_t(\cdot;\theta_k), a_t; \theta_k)v(h'; \theta_k)\Big]\Big]}_{=: R_2'} \\
&\qquad + \underbrace{\Ec\Big[\sum_{k=1}^{K_T}\sum_{t=t_k}^{t_{k+1}-1}\Big[c(h_t(\cdot;\theta_*), a_t) - c(h_t(\cdot;\theta_k), a_t)\Big]\Big]}_{=: R_3}
\end{align*}
where $K_T$ is the number of episodes upto time $T$, $t_k$ is the start time of episode $k$ (we let $t_k = T+1$ for all $k > K_T$). The telescopic sum is equal to $v(h_{t_k}(\cdot;\theta_k); \theta_k) - v(h_{t_{k+1}}(\cdot;\theta_k); \theta_k) \leq H$. Thus, the first term on the right hand side is upper bounded by $H\Ec[K_T]$. Suffices to show that $R_2' \leq R_2$. Throughout the proof, we change the order of expectation and summation at several points. A rigorous proof for why this is allowed in the case that $K_T$ and $t_k$ are random variables is presented in the proof of Lemma~\ref{lem: r3}. 

We proceed by bounding the term $R_2'$. Recall that $h' = \tau(h_t(\cdot;\theta_k), a_t, o; \theta_k)$ and $h_{t+1}(\cdot; \theta_k) = \tau(h_t(\cdot;\theta_k), a_t, o_{t+1}; \theta_k)$.
Conditioned on $\calF_{t}, \theta_*, \theta_k$, the only random variable in $h_{t+1}(\cdot;\theta_k)$ is $o_{t+1}$ ($a_t=\pi^*(h_t(\cdot;\theta_k); \theta_k)$ is measurable with respect to the sigma algebra generated by $\calF_t, \theta_k$). Therefore,
\begin{align}
\label{eq: app expected v}
&\Ec\Big[v(h_{t+1}(\cdot;\theta_k); \theta_k) | \calF_{t}, \theta_k\Big] = \sum_{o \in O} v(h'; \theta_k) \pc(o_{t+1}=o | \calF_{t}, \theta_k).
\end{align}
We claim that $\pc(o_{t+1}=o | \calF_{t}, \theta_k) = P(o | h_t(\cdot;\theta_*), a_t; \theta_*)$: by the total law of probability and that $\pc (o_{t+1}=o | s_{t+1}=s', \calF_{t}, \theta_k) = \eta(o|s')$, we can write
\begin{align*}
&\pc(o_{t+1}=o | \calF_{t}, \theta_k) = \sum_{s'} \eta(o|s') \pc(s_{t+1}=s' | \calF_{t}, \theta_k).
\end{align*}
Note that
\begin{align*}
\pc(s_{t+1}=s' | \calF_{t}, \theta_k) &= \sum_{s} \pc(s_{t+1}=s' | s_t=s, \calF_t, a_t, \theta_k)\pc(s_t=s | \calF_t, \theta_k) \\
&=\sum_{s} \theta_*(s' | s, a_t) \pc(s_t=s | \calF_t).
\end{align*}
Thus,
\begin{align}
\label{eq: app p equal to p}
\pc(o_{t+1}=o | \calF_{t}, \theta_k) &= \sum_{s, s'} \eta(o | s') \theta_*(s' | s, a_t) h_t(s; \theta_*) = P(o | h_t(\cdot;\theta_*), a_t; \theta_*).
\end{align}
Combining \eqref{eq: app p equal to p} with \eqref{eq: app expected v} and substituting into $R_2'$, we get
\begin{align*}
R_2' &= \Ec\Big[\sum_{k=1}^{K_T}\sum_{t=t_k}^{t_{k+1}-1}\Big[\sum_{o \in O}\Big(P(o | h_t(\cdot; \theta_*), a_t; \theta_*) - P(o | h_t(\cdot;\theta_k), a_t; \theta_k)\Big)v(h'; \theta_k)\Big]\Big].
\end{align*}
Recall that for any $\theta \in \Theta$, $P(o | h_t(\cdot;\theta), a_t; \theta) = \sum_{s'} \eta(o | s') \sum_{s} \theta(s' | s, a_t) h_t(s; \theta)$. Thus,
\begin{align}
\label{eq: app r1bar in terms of h minus h}
R_2' &= \Ec\Big[\sum_{k=1}^{K_T}\sum_{t=t_k}^{t_{k+1}-1}\sum_{o, s'}v(h'; \theta_k) \eta(o | s')\sum_{s} \theta_*(s' | s, a_t) h_t(s; \theta_*)\Big] \nonumber \\
&\qquad- \Ec\Big[\sum_{k=1}^{K_T}\sum_{t=t_k}^{t_{k+1}-1}\sum_{o, s'}v(h'; \theta_k) \eta(o | s')\sum_{s} \theta_k(s' | s, a_t) h_t(s; \theta_*)\Big] \nonumber \\
&\qquad + \Ec\Big[\sum_{k=1}^{K_T}\sum_{t=t_k}^{t_{k+1}-1}\sum_{o, s'}v(h'; \theta_k) \eta(o | s')\sum_{s}\theta_k(s' | s, a_t) \big(h_t(s; \theta_*) -h_t(s; \theta_k)\big)\Big].
\end{align}
For the first term, note that conditioned on $\calF_{t}, \theta_*$, the distribution of $s_t$ is $h_t(\cdot;\theta_*)$ by the definition of $h_t$. Furthermore, $a_t$ is measurable with respect to the sigma algebra generated by $\calF_t, \theta_k$ since $a_t = \pi^*(h_t(\cdot;\theta_k); \theta_k)$. Thus, we have
\begin{align}
\label{eq: app removing ht theta star}
&\Ec\Big[v(h'; \theta_k)\sum_{s} \theta_*(s' | s, a_t) h_t(s; \theta_*)\Big|\calF_{t}, \theta_k\Big] = v(h'; \theta_k)\Ec\Big[\theta_*(s'|s_t, a_t)\Big|\calF_{t}, \theta_k\Big].
\end{align}
Similarly, for the second term on the right hand side of \eqref{eq: app r1bar in terms of h minus h}, we have
\begin{align}
\label{eq: app removing ht theta k}
&\Ec\Big[v(h'; \theta_k)\sum_{s} \theta_k(s' | s, a_t) h_t(s; \theta_*)\Big|\calF_{t}, \theta_k \Big] = v(h'; \theta_k)\Ec\Big[\theta_k(s'|s_t, a_t)\Big|\calF_{t}, \theta_k\Big].
\end{align}
Replacing \eqref{eq: app removing ht theta star}, \eqref{eq: app removing ht theta k} into \eqref{eq: app r1bar in terms of h minus h} and using the tower property of conditional expectation, we get
\begin{align}
\label{eq: app intermediate bound of r1bar}
R_2' &= \Ec\Big[\sum_{k=1}^{K_T}\sum_{t=t_k}^{t_{k+1}-1}\Big[\sum_{s'}\sum_{o} v(h'; \theta_k) \eta(o | s')\Big(\theta_*(s' | s_t, a_t) - \theta_k(s'|s_t, a_t)\Big)\Big]\Big] \nonumber \\
&\qquad+ \Ec\Big[\sum_{k=1}^{K_T}\sum_{t=t_k}^{t_{k+1}-1}\Big[\sum_{s'}\sum_{o}v(h'; \theta_k) \eta(o | s')\sum_{s} \theta_k(s' | s, a_t) \big(h_t(s; \theta_*) -h_t(s; \theta_k)\big)\Big]\Big].
\end{align}
Since $\sup_{b \in \Delta_\calS}v(b, \theta_k) \leq H$ and $\sum_o \eta(o|s') = 1$, the inner summation for the first term on the right hand side of \eqref{eq: app intermediate bound of r1bar} can be bounded as
\begin{align}
\label{eq: app common lem 1 tmp1}
&\sum_{o \in O}v(h'; \theta_k) \eta(o | s')\Big(\theta_*(s' | s_t, a_t) - \theta_k(s'|s_t, a_t)\Big) \leq H\Big|\theta_*(s' | s_t, a_t) - \theta_k(s'|s_t, a_t)\Big|.
\end{align}
Using $\sup_{b \in \Delta_\calS}v(b, \theta_k) \leq H$, $\sum_o \eta(o|s') = 1$ and $\sum_{s'}\theta_k(s'|s, a_t) = 1$, the second term on the right hand side of \eqref{eq: app intermediate bound of r1bar} can be bounded as
\begin{align}
\label{eq: app common lem 1 tmp2}
&\sum_{s'}\sum_{o \in O}v(h'; \theta_k) \eta(o | s')\sum_{s} \theta_k(s' | s, a_t)\big|h_t(s;\theta_*) - h_t(s;\theta_k)\big| \leq H\sum_{s}\big|h_t(s;\theta_*) - h_t(s;\theta_k)\big| \nonumber \\
\end{align}
Substituting \eqref{eq: app common lem 1 tmp1} and \eqref{eq: app common lem 1 tmp2} into \eqref{eq: app intermediate bound of r1bar} proves that $R_2' \leq R_2$.
\end{proof}


\section{Proofs of Section~\ref{sec: finite}}
\label{app: finite}

\subsection{Proof of Lemma~\ref{lem: 1-ft}}
\textbf{Lemma} (restatement of Lemma~\ref{lem: 1-ft})\textbf{.} Suppose Assumption~\ref{ass: kl divergence} holds. Then, there exist constants $\alpha > 1$ and $\beta > 0$ such that
\begin{align*}
\E[1 - f_t(\theta_*) | \theta_*] \leq \alpha \exp(-\beta t).
\end{align*}
\begin{proof}
Let $\tau_t$ be the trajectory $\{a_1, o_1, \cdots, a_{t-1}, o_{t-1}, o_t\}$ and define the likelihood function
\begin{align}
\calL(\tau_t | \theta) &:= \p(\tau_t | \theta) = \p(o_1|\theta)\prod_{\tau=2}^t \p(o_{\tau}|o_{1:\tau-1}, a_{1:\tau-1}|\theta) = \p(o_1|\theta)\prod_{\tau=2}^t \vthetatau(o_\tau)
\end{align}
Note that $\p(o_1|\theta)$ is independent of $\theta$, thus
\begin{align*}
\frac{\calL(\tau_t | \theta)}{\calL(\tau_t | \gamma)} = \prod_{\tau=2}^t\frac{\vthetatau(o_\tau)}{\vgammatau(o_\tau)}
\end{align*}
Recall that $f_t(\cdot)$ is the posterior associated with the likelihood given by
\begin{align*}
f_t(\theta) = \frac{\calL(\tau_t | \theta) f(\theta)}{\sum_{\gamma \in \Theta}\calL(\tau_t | \gamma)f(\gamma)}.
\end{align*}
We now proceed to lower bound $f_t(\theta_*)$. We can write
\begin{align*}
 f_t(\theta_*) &= \frac{\calL(\tau_t | \theta_*) f(\theta_*)}{\sum_{\theta}\calL(\tau_t | \theta)f(\theta)} = \frac{1}{1 + \sum_{\theta \neq \theta_*}\frac{f(\theta)}{f(\theta_*)} \frac{\calL(\tau_t | \theta)}{\calL(\tau_t | \theta_*)}} \\
&= \frac{1}{1 + \sum_{\theta \neq \theta_*}\frac{f(\theta)}{f(\theta_*)} \exp(-\sum_{\tau=1}^t\log\Lambda_\tau^\theta)},
\end{align*}
where we define $\Lambda_1^\theta := 1$ and for $\tau \geq 2$,
\begin{align*}
\Lambda_\tau^\theta := \frac{\vthetastartau(o_\tau)}{\vthetatau(o_\tau)}
\end{align*}
Note that without loss of generality, we can assume that the denominator in the definition of $\Lambda_\tau^\theta$ is positive (otherwise, $\calL(\tau_t | \theta) = 0$ and can be excluded from the denominator of $ f_t(\theta_*)$) and thus $\Lambda_\tau^\theta$ is well-defined. 

Denote by $Z_t^\theta := \sum_{\tau=1}^t\log\Lambda_\tau^\theta$ and decompose it as $Z_t^\theta = M_t^\theta + A_t^\theta$ where
\begin{align*}
M_t^\theta &:= \sum_{\tau=1}^t\Big(\log\Lambda_\tau^\theta - \E\Big[\log\Lambda_\tau^\theta \big| \calF_{\tau-1}, \theta_*\Big]\Big), \\
A_t^\theta &:= \sum_{\tau=1}^t \E\Big[\log\Lambda_\tau^\theta\big| \calF_{\tau-1}, \theta_*\Big].
\end{align*}
Note that the terms inside the first summation constitute a martingale difference sequence with respect to the filtration $(\calF_\tau)_{\tau \geq 1}$ and conditional probability $\p(\cdot|\theta_*)$. Each term is bounded as $|\log\Lambda_\tau^\theta - \E[\log\Lambda_\tau^\theta | \calF_{\tau-1}, \theta_*]| \leq d$ for some $d > 0$.
The second term, $A_t^\theta$ can be lower bounded using Assumption~\ref{ass: kl divergence} as follows
\begin{align*}
\E\Big[\log\Lambda_\tau^\theta\big| \calF_{\tau-1}, \theta_*\Big] &= \E\bigg[\E\Big[\log\Lambda_\tau^\theta\big| \calF_{\tau-1}, a_{\tau-1}, \theta_*\Big]\Big| \calF_{\tau-1}, \theta_*\bigg] \\
&= \E\Big[\calK(\vthetastartau \| \vthetatau)\big|\calF_{\tau-1}, \theta_*\Big] \geq \epsilon
\end{align*}
Summing over $\tau$ implies that
\begin{align}
\label{eq: atdelta bound}
A_t^\theta \geq \epsilon t.
\end{align}
To bound $M_t^\theta$, let $0 < \delta < \epsilon$, and apply Azuma's inequality to obtain
\begin{align*}
\p\Big(|M_t^\theta| \geq \delta t \big| \theta_*\Big) \leq 2\exp(-\frac{\delta^2t}{2d^2}).
\end{align*}
Union bound over all $\theta \neq \theta_*$ implies that the event $B_t^\delta := \cap_{\theta \neq \theta_*}\{|M_t^\theta| \leq \delta t\}$ happens with probability at least $1 - 2(|\Theta|-1)\exp(-\frac{\delta^2t}{2d^2})$. If $B_t^\delta$ holds, then $-M_t^\theta \leq \delta t$ for all $\theta \neq \theta_*$. Combining this with \eqref{eq: atdelta bound} implies that $\exp(-M_t^\theta - A_t^\theta) \leq \exp(\delta t - \epsilon t)$. 
Therefore,
\begin{align*}
\E[f_t(\theta_*) | \theta_*] &= \E\Bigg[\frac{1}{1 + \sum_{\theta \neq \theta_*}\frac{f(\theta)}{f(\theta_*)} \exp(-M_t^\theta - A_t^\theta)}\bigg| \theta_* \Bigg] \\
&\geq \E\Bigg[\frac{\1(B_\delta^t)}{1 + \sum_{\theta \neq \theta_*}\frac{f(\theta)}{f(\theta_*)} \exp(\delta t - \epsilon t)} \bigg| \theta_* \Bigg] \\
&= \frac{\p(B_\delta^t | \theta_*)}{1 + \frac{1 - f(\theta_*)}{f(\theta_*)} \exp(\delta t - \epsilon t)} \\
&\geq \frac{1 - 2(|\Theta| - 1)\exp(-\frac{\delta^2t}{2d^2})}{1 + \frac{1 - f(\theta_*)}{f(\theta_*)} \exp(\delta t - \epsilon t)}.
\end{align*}
Now, by choosing $\delta = \epsilon/2$, and constants $\alpha = 2\max \{\max_{\theta \in \Theta}\frac{1 - f(\theta)}{f(\theta)}, 2(|\Theta|-1)\}$, and $\beta = \min \{\frac{\epsilon}{2}, \frac{\epsilon^2}{8d^2}\}$, we have
\begin{align*}
\E[1 - f_t(\theta_*) | \theta_*] &\leq 1 - \frac{1 - 2(|\Theta| - 1)\exp(-\frac{\delta^2t}{2d^2})}{1 + \frac{1 - f(\theta_*)}{f(\theta_*)} \exp(\delta t - \epsilon t)} \\
&= \frac{\frac{1 - f(\theta_*)}{f(\theta_*)} \exp(\delta t - \epsilon t) + 2(|\Theta| - 1)\exp(-\frac{\delta^2t}{2d^2})}{1 + \frac{1 - f(\theta_*)}{f(\theta_*)} \exp(\delta t - \epsilon t)} \\
&\leq \frac{1 - f(\theta_*)}{f(\theta_*)} \exp(\delta t - \epsilon t) + 2(|\Theta| - 1)\exp(-\frac{\delta^2t}{2d^2}) \\
&= \frac{1 - f(\theta_*)}{f(\theta_*)} \exp(-\frac{\epsilon t}{2}) + 2(|\Theta| - 1)\exp(-\frac{\epsilon^2t}{8d^2}) \\
&\leq \alpha \exp(-\beta t).
\end{align*}
\end{proof}

\section{Proofs of Section~\ref{sec: general}}
\label{app: general}

\subsection{Full Upper Bound on the Expected Regret of Theorem~\ref{thm: regret bound}}
\label{app: full upper bound}
The exact expression for the upper bound of the expected regret in Theorem~\ref{thm: regret bound} is
\begin{align*}
&\E[R_T] = H\E[K_T] + \E[R_1] + \E[R_2] + \E[R_3] \\
&\leq (1 + H)\E[K_T]  + 12HK_2(|\calS||\calA|T)^{2/3} \\
&\quad+ (H+1)K_1 \E\Big[\sum_{k=1}^{K_T}\frac{T_k}{\sqrt{t_k}}\Big] + 2 + H \\
&\leq (1 + H)\sqrt{2T(1 + |\calS||\calA| \log (T+1))}  + 12HK_2(|\calS||\calA|T)^{2/3} \\
&\quad+ 7(H+1)K_1\sqrt{2T}(1 + |\calS||\calA|\log (T+1)) \log\sqrt{2T} + 2 + H.
\end{align*}

\subsection{Finite-parameter Case Satisfies Assumptions~\ref{ass: concentrating belief} and \ref{ass: concentration}}
In this section, we show that Assumptions~\ref{ass: concentrating belief} and \ref{ass: concentration} are satisfied for the finite-parameter case i.e., $|\Theta| < \infty$ as long as the \texttt{PSRL-POMDP} generates a deterministic schedule. As an instance, a deterministic schedule can be generated by choosing $\tilde m_t(s, a) = t$ for all state-action pairs $(s, a)$ and running Algorithm~\ref{alg: posterior sampling} with either $\sched(t_k, T_{k-1}) = 2t_k$ or $\sched(t_k, T_{k-1}) = t_k + T_{k-1}$.
\begin{lemma}
\label{lem: assumptions satisfied for finite-parameter}
Assume $|\Theta| < \infty$. If Algorithm~ref{alg: posterior sampling} generates a deterministic schedule, then Assumptions~\ref{ass: concentrating belief} and \ref{ass: concentration} are satisfied.
\end{lemma}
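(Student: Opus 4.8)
The plan is to reduce both assumptions to a single quantity — the posterior mass missing from the true parameter, $1-f_t(\theta_*)$ — and then control that uniformly in $t$. First I would fix the algorithmic choices: take the pseudo-count $\tilde m_t(s,a)=t$ (a legal pseudo-count, since $\lceil\tilde n_t(s,a)\rceil\le t$), which together with either $\sched(t_k,T_{k-1})=2t_k$ or $\sched(t_k,T_{k-1})=t_k+T_{k-1}$ makes the episode schedule deterministic; take the estimator to be the posterior-mean transition kernel $\hat\theta_t(\cdot\mid s,a):=\sum_{\theta\in\Theta}f_t(\theta)\,\theta(\cdot\mid s,a)$, which is $\calF_t$-measurable and lies in $\Delta_\calS$; and, for Assumption~\ref{ass: concentrating belief}, use the approximation $h_t(\cdot;\theta_{k(t)})$ that the algorithm already maintains. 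Two elementary bounds then perform the reduction: by the triangle inequality,
\[
\sum_{s'}\bigl|\theta_*(s'\mid s,a)-\hat\theta_t(s'\mid s,a)\bigr|\;\le\;\sum_{\theta}f_t(\theta)\sum_{s'}\bigl|\theta_*(s'\mid s,a)-\theta(s'\mid s,a)\bigr|\;\le\;2\bigl(1-f_t(\theta_*)\bigr),
\]
and, since the belief gap vanishes when the sampled parameter equals $\theta_*$ and is at most $2$ otherwise, $\sum_s\bigl|h_t(s;\theta_*)-h_t(s;\theta_{k(t)})\bigr|\le 2\,\1\{\theta_{k(t)}\ne\theta_*\}$. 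For the latter I would take the conditional expectation given $\calF_{t_{k(t)}}$ and use that $\theta_{k(t)}\sim f_{t_{k(t)}}(\cdot)$ is drawn independently of $\theta_*$, whose posterior is also $f_{t_{k(t)}}$, so $\E[\1\{\theta_{k(t)}\ne\theta_*\}\mid\calF_{t_{k(t)}}]=1-\sum_\theta f_{t_{k(t)}}(\theta)^2\le 2\bigl(1-f_{t_{k(t)}}(\theta_*)\bigr)$.

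The crux — and the step I expect to be the main obstacle — is to upgrade Lemma~\ref{lem: 1-ft} to a statement that holds \emph{uniformly in $t$ and with high probability}: with probability at least $1-\delta$, $1-f_t(\theta_*)\le K/\sqrt t$ for all $t\le T+1$, where $K$ is polynomial in the problem parameters and only poly-logarithmic in $1/\delta$ (and, crucially, independent of $T$). A naive Markov-plus-union-bound over $t$ applied to the in-expectation bound $\E[1-f_t(\theta_*)\mid\theta_*]\le\alpha e^{-\beta t}$ would insert a factor $T/\delta$ into $K$, which is not admissible. Instead I would revisit the proof of Lemma~\ref{lem: 1-ft}: on the Azuma event $B_t^{\epsilon/2}:=\bigcap_{\theta\ne\theta_*}\{|M_t^\theta|\le\tfrac{\epsilon}{2}t\}$, which fails with probability at most $2(|\Theta|-1)e^{-\epsilon^2t/(8d^2)}$, one has $1-f_t(\theta_*)\le\bar\alpha\,e^{-\epsilon t/2}$ with $\bar\alpha:=\max_\theta\tfrac{1-f(\theta)}{f(\theta)}$. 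Because these failure probabilities are summable in $t$, picking a cut-off $T_0=\order\bigl(\tfrac{d^2}{\epsilon^2}\log\tfrac{|\Theta|}{\delta}\bigr)$ makes $\sum_{t>T_0}2(|\Theta|-1)e^{-\epsilon^2t/(8d^2)}\le\delta$; on the resulting event $\mathcal G$ one has $1-f_t(\theta_*)\le\bar\alpha e^{-\epsilon t/2}$ for all $t>T_0$, and trivially $1-f_t(\theta_*)\le 1$ for $t\le T_0$. Combining the two regimes via $\1\{t\le T_0\}\le\sqrt{T_0}/\sqrt t$ and $\sup_{t\ge1}\sqrt t\,e^{-\epsilon t/2}=:C_\epsilon<\infty$ yields $1-f_t(\theta_*)\le K/\sqrt t$ on $\mathcal G$ with $K=\order(\sqrt{T_0}+\bar\alpha)$.

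Finally I would plug the uniform bound back in. Since $\tilde m_t(s,a)=t=\max\{1,\tilde m_t(s,a)\}$ for $t\ge1$, the first display gives $\sum_{s'}|\theta_*(s'\mid s,a)-\hat\theta_t(s'\mid s,a)|\le 2K/\sqrt{\max\{1,\tilde m_t(s,a)\}}$ for all $(s,a)$ and all $t\le T$ on $\mathcal G$, which is Assumption~\ref{ass: concentration} with $K_2=2K$; and since $\mathcal G$ implies the $\calF_{t_{k(t)}}$-measurable event $\{1-f_{t_{k(t)}}(\theta_*)\le K/\sqrt{t_{k(t)}}\}$, the second bound gives $\E[\sum_s|h_t(s;\theta_*)-h_t(s;\theta_{k(t)})|\mid\calF_{t_{k(t)}}]\le 4K/\sqrt{t_{k(t)}}$ on $\mathcal G$, which is Assumption~\ref{ass: concentrating belief} with $K_1=4K$ (if the expectation in Assumption~\ref{ass: concentrating belief} is read unconditionally, it follows even more directly, since $\E[1-f_{t_{k(t)}}(\theta_*)]\le\alpha e^{-\beta t_{k(t)}}\le\alpha C_\beta/\sqrt{t_{k(t)}}$ by Lemma~\ref{lem: 1-ft}). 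I would remark that the argument invokes Assumption~\ref{ass: kl divergence}, the standing identifiability assumption of the finite-parameter regime, through Lemma~\ref{lem: 1-ft}, and that nothing in the argument distinguishes $\sched=2t_k$ from $\sched=t_k+T_{k-1}$ once $\tilde m_t=t$ is fixed; the single delicate point, as noted, is producing the $T$-free, poly-logarithmic constant $K$ in the uniform concentration of $f_t(\theta_*)$ rather than the lossy $T/\delta$ that Markov's inequality would give.
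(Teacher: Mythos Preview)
Your proposal is correct and arrives at the same conclusion as the paper, but the route differs in two notable ways.

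\textbf{Choice of estimator.} For Assumption~\ref{ass: concentration} you use the posterior mean $\hat\theta_t=\sum_\theta f_t(\theta)\,\theta$ and bound $\|\theta_*-\hat\theta_t\|_1\le 2(1-f_t(\theta_*))$; the paper instead takes the MAP estimator $\hat\theta_t=\argmax_\theta f_t(\theta)$ and observes that the left-hand side is \emph{exactly zero} whenever $f_t(\theta_*)>1/2$, which by Markov and Lemma~\ref{lem: 1-ft} fails with probability at most $2\alpha e^{-\beta t}$. The MAP device is shorter because it turns the estimation error into an indicator, whereas your approach produces a random bound that then needs its own high-probability control.

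\textbf{Handling the ``uniformly in $t$'' clause.} You correctly flag that a naive Markov-plus-union-bound would inflate the constant by $T/\delta$, and you resolve this by reopening the Azuma argument inside Lemma~\ref{lem: 1-ft} to get a single event $\calG$ of probability $\ge 1-\delta$ on which $1-f_t(\theta_*)\le K/\sqrt t$ for all $t$, with $K$ poly-logarithmic in $1/\delta$. The paper is more cavalier here: it fixes a threshold $T_0=(-1/\beta)\log(\delta/(2\alpha))$, notes that for $t<T_0$ the trivial bound $2\le K_2/\sqrt t$ holds by choosing $K_2=2\sqrt{T_0}$, and for $t\ge T_0$ the MAP equals $\theta_*$ with probability $\ge 1-\delta$ (per $t$; uniformity follows cheaply because the failure probabilities $2\alpha e^{-\beta t}$ are summable). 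Your treatment is more explicit and arguably more careful on this point; the paper's is terser but patchable in the same way.

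For Assumption~\ref{ass: concentrating belief} both arguments are essentially the same reduction to $1-f_{t_k}(\theta_*)$; you condition on $\calF_{t_k}$ and use $1-\sum_\theta f_{t_k}(\theta)^2\le 2(1-f_{t_k}(\theta_*))$, while the paper conditions on $\theta_*$ and uses $\p(\theta_k\ne\theta_*\mid\theta_*)=\E[1-f_{t_k}(\theta_*)\mid\theta_*]$ directly. One small slip: the event $\{1-f_{t_{k(t)}}(\theta_*)\le K/\sqrt{t_{k(t)}}\}$ is not $\calF_{t_{k(t)}}$-measurable as you state, since $\theta_*$ is not; but this does not affect the argument, and your parenthetical ``unconditional'' reading already covers what is needed.
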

\begin{proof}
Observe that the left hand side of \eqref{eq: ass concentrating belief} is zero if $\theta_{k(t)} = \theta_*$, and is upper bounded by $2$ if $\theta_{k(t)} \neq \theta_*$. Thus, we can write
\begin{align*}
\E\Big[\sum_{s}\big|h_t(s;\theta_*) - h_t(s;\theta_{k(t)})\big|\Big | \theta_*\Big] \leq 2\p(\theta_{k(t)} \neq \theta_* | \theta_*) = 2\E\sbr{1 - f_{t_{k(t)}}(\theta_*) | \theta_*} \leq \alpha \exp(-\beta t_{k(t)}),
\end{align*} 
which obviously satisfies Assumption~\ref{ass: concentrating belief} by choosing a large enough constant $K_1$. Here, the last equality is by Lemma~\ref{lem: 1-ft} and that the start time of episode $k(t)$ is deterministic.

To see why Assumption~\ref{ass: concentration} is satisfied, let $\hat \theta_t$ be the Maximum a Posteriori (MAP) estimator, i.e., $\hat \theta_t = \argmax_{\theta \in \Theta} f_t(\theta)$. Then, the left hand side of \eqref{eq: ass concentration} is equal to zero if $\hat \theta_t = \theta_*$. Note that this happens with high probability with the following argument:
\begin{align*}
\p(\hat \theta_t \neq \theta_* | \theta_*) \leq \p\rbr{f_t(\theta_*) \leq 0.5 | \theta_*} = \p(1 - f_t(\theta_*) \geq 0.5 | \theta_*) \leq 2\E[1 - f_t(\theta_*) | \theta_*] \leq 2\alpha \exp(-\beta t).
\end{align*}
Here the first inequality is by the fact that if $f_t(\theta_*) > 0.5$, then the MAP estimator would choose $\hat \theta_t = \theta_*$. The second inequality is by applying Markov inequality and the last inequality is by Lemma~\ref{lem: 1-ft}. Note that $\tilde m_t(s, a) \leq t$ by definition. We claim that Assumption~\ref{ass: concentration} is satisfied by choosing $K_2 = 2\sqrt{(-1/\beta)\log (\delta/2\alpha)}$. To see this, note that $2\alpha \exp(-\beta t) \leq \delta$ for $t \geq (-1/\beta)\log (\delta/2\alpha)$. In this case, \eqref{eq: ass concentration} automatically holds since with probability at least $1 - \delta$ the left hand side is zero. For $t  < (-1/\beta)\log (\delta/2\alpha)$, note that the left hand side of \eqref{eq: ass concentration} can be at most 2. Therefore, $K_2$ can be found by solving $2 \leq K_2/\sqrt{(-1/\beta)\log (\delta/2\alpha)}$.
\end{proof}

\subsection{Auxiliary Lemmas for Section~\ref{sec: general}}
\begin{lemma}
\label{lem: j minus jstar}[Lemma 3 in \cite{ouyang2017learning}]
The term $\E[R_1]$ can be bounded as $\E[R_1] \leq \E[K_T]$.
\end{lemma}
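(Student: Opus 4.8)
The plan is to prove $\E[R_1] \le \E[K_T]$ by combining the posterior-sampling identity (Lemma~\ref{lem: property of posterior sampling}) with the episode-length control $T_k \le T_{k-1}+1$ guaranteed by the first stopping criterion $\sched(t_k,T_{k-1}) = t_k + T_{k-1}$. Recall $R_1 = \Ec[\sum_{k=1}^{K_T} T_k(J(\theta_k) - J(\theta_*))]$, so after taking a further expectation over $\theta_*$ we have $\E[R_1] = \E[\sum_{k=1}^{K_T} T_k(J(\theta_k) - J(\theta_*))]$. The first move is to write this as a sum over all $k \ge 1$ by using the convention $t_k = T+1$ for $k > K_T$ (so that $T_k = 0$ and the summand vanishes): $\E[R_1] = \E[\sum_{k \ge 1} T_k(J(\theta_k) - J(\theta_*))]$.

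Next I would isolate the per-episode term. The length $T_k = t_{k+1} - t_k$ of episode $k$ is determined by events up to time $t_{k+1}-1$, hence it is \emph{not} $\calF_{t_k}$-measurable, which blocks a direct application of Lemma~\ref{lem: property of posterior sampling} to $g(\theta_k, X) = T_k(J(\theta_k) - J(\theta_*))$. The standard trick (as in \cite{ouyang2017learning}) is to replace $T_k$ by the $\calF_{t_k}$-measurable quantity $T_{k-1}+1$: write
\begin{align*}
\E\Big[\sum_{k\ge1} T_k\big(J(\theta_k) - J(\theta_*)\big)\Big] = \E\Big[\sum_{k\ge1} (T_{k-1}+1)\big(J(\theta_k) - J(\theta_*)\big)\Big] + \E\Big[\sum_{k\ge1} \big(T_k - T_{k-1} - 1\big)\big(J(\theta_k) - J(\theta_*)\big)\Big].
\end{align*}
For the first sum, note $(T_{k-1}+1)$ depends only on the episode boundaries up through $t_k$, hence is $\calF_{t_k}$-measurable; applying Lemma~\ref{lem: property of posterior sampling} with $g(\theta,X) = X\cdot(J(\theta) - J(\theta_*))$... — wait, $J(\theta_*)$ is not a function of the first argument only, so instead apply the lemma term-by-term: $\E[(T_{k-1}+1)J(\theta_k)] = \E[(T_{k-1}+1)J(\theta_*)]$ since $X = (T_{k-1}+1)\cdot\mathbf{1}\{k \le K_T\}$ is $\calF_{t_k}$-measurable and $g(\theta, X) = X J(\theta)$. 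Hence each term of the first sum vanishes. For the second sum, since $T_k - T_{k-1} - 1 \le 0$ by the first stopping criterion and $J(\theta_k) - J(\theta_*)$ could have either sign but is bounded in $[-1,1]$ (costs lie in $[0,1]$), I would bound it crudely: $(T_k - T_{k-1}-1)(J(\theta_k)-J(\theta_*)) \le (T_{k-1}+1 - T_k) \le T_{k-1}+1-T_k$ when $J(\theta_k)\ge J(\theta_*)$ is false, i.e., using $|J(\theta_k)-J(\theta_*)|\le 1$ and $T_{k-1}+1-T_k \ge 0$, we get $(T_k - T_{k-1}-1)(J(\theta_k)-J(\theta_*)) \le T_{k-1}+1-T_k$. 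Summing telescopically over $k = 1,\dots,K_T$ gives $\sum_{k=1}^{K_T}(T_{k-1}+1-T_k) = K_T + T_0 - T_{K_T} \le K_T$ (using $T_0 = 0$ or the appropriate initialization and $T_{K_T}\ge 0$). Taking expectations yields $\E[R_1] \le \E[K_T]$.

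The main obstacle I anticipate is the measure-theoretic bookkeeping around the random number of episodes $K_T$ and the random stopping times $t_k$: one must justify interchanging the infinite sum over $k$ with expectation (dominated convergence, using that all summands vanish for $k > K_T$ and $K_T \le T$ almost surely), and one must verify carefully that the indicator $\mathbf{1}\{k \le K_T\}$ is $\calF_{t_k}$-measurable so that Lemma~\ref{lem: property of posterior sampling} genuinely applies to the truncated random variable $X = (T_{k-1}+1)\mathbf{1}\{k\le K_T\}$. Since this is exactly Lemma 3 of \cite{ouyang2017learning} transported verbatim to the POMDP setting (nothing about partial observability changes the argument — it only uses the Bellman structure through $J(\theta)$ and the posterior-sampling property), I would simply cite that proof, noting that $K_T$ is a.s.\ finite and the costs are bounded in $[0,1]$.
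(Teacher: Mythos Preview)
Your proposal is correct and follows essentially the same approach as the paper. The paper separates $\sum_k T_k J(\theta_k)$ from $T\,J(\theta_*)$, uses $J(\theta_k)\ge 0$ to replace $T_k$ by $T_{k-1}+1$, applies Lemma~\ref{lem: property of posterior sampling} to the $\calF_{t_k}$-measurable weight $\1(t_k\le T)(T_{k-1}+1)$, and then closes with $0\le J(\theta_*)\le 1$ and $\sum_{k=1}^{K_T}T_{k-1}\le T$; your additive decomposition $T_k=(T_{k-1}+1)+(T_k-T_{k-1}-1)$ is just a cosmetic rearrangement of the same three ingredients (the first piece vanishes exactly by posterior sampling, the second telescopes to $K_T+T_0-T_{K_T}\le K_T$ since in this algorithm $T_0=1$ and $T_{K_T}\ge 1$).
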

\begin{proof}
\begin{align*}
\E[R_1] &= \E\Big[\sum_{k=1}^{K_T}T_k\Big[J(\theta_k) - J(\theta_*)\Big]\Big] = \E\Big[\sum_{k=1}^{\infty}\1(t_k \leq T)T_kJ(\theta_k)\Big] - T\E[J(\theta_*)].
\end{align*}
By monotone convergence theorem and the fact that $J(\theta_k) \geq 0$ and $T_k \leq T_{k-1} + 1$ (the first criterion in determining the episode length in Algorithm~\ref{alg: posterior sampling}), the first term can be bounded as
\begin{align*}
&\E\Big[\sum_{k=1}^{\infty}\1(t_k \leq T)T_kJ(\theta_k)\Big] = \sum_{k=1}^{\infty}\E\Big[\1(t_k \leq T)T_kJ(\theta_k)\Big] \\
&\leq \sum_{k=1}^{\infty}\E\Big[\1(t_k \leq T)(T_{k-1}+1)J(\theta_k)\Big].
\end{align*}
Note that $\1(t_k \leq T)(T_{k-1} + 1)$ is $\calF_{t_k}$-measurable. Thus, by the property of posterior sampling (Lemma~\ref{lem: property of posterior sampling}), $\E[\1(t_k \leq T)(T_{k-1}+1)J(\theta_k)] = \E[\1(t_k \leq T)(T_{k-1}+1)J(\theta_*)]$. Therefore,
\begin{align*}
\E[R_1] &\leq \E\Big[\sum_{k=1}^{\infty}\1(t_k \leq T)(T_{k-1}+1)J(\theta_*)\Big] - T\E[J(\theta_*)] \\
&= \E\Big[J(\theta_*)(K_T + \sum_{k=1}^{K_T}T_{k-1})\Big] - T\E[J(\theta_*)] \\
&= \E[J(\theta_*)K_T] +  \E\Big[J(\theta_*)(\sum_{k=1}^{K_T}T_{k-1} - T)\Big]\leq \E[K_T],
\end{align*}
where the last inequality is by the fact that $\sum_{k=1}^{K_T}T_{k-1} - T \leq 0$ and $0 \leq J(\theta_*) \leq 1$.
\end{proof}

\begin{lemma}
\label{lem: r3}
The term $\E[R_3]$ can be bounded as
\begin{align*}
\E[R_3] \leq K_1\E\Big[\sum_{k=1}^{K_T}\frac{T_k}{\sqrt{t_k}}\Big] + 1,
\end{align*}
where $K_1:= K_1(|\calS|, |\calA|, |O|, \iota)$ is the constant in Assumption~\ref{ass: concentrating belief}.
\end{lemma}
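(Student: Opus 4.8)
The plan is to reduce $R_3$ to a sum of one-step belief discrepancies and then invoke Assumption~\ref{ass: concentrating belief}. First I would collapse the episodic double sum into a single sum over time: since $t_1 = 1$ and (with the convention $t_{K_T+1} = T+1$) the index pairs $\{(k,t): 1\le k\le K_T,\ t_k \le t \le t_{k+1}-1\}$ are in bijection with $\{1,\dots,T\}$, we may write
\[
R_3 = \Ec\Big[\sum_{t=1}^{T}\big(c(h_t(\cdot;\theta_*),a_t) - c(h_t(\cdot;\theta_{k(t)}),a_t)\big)\Big],
\]
where $k(t)$ denotes the episode containing $t$. This reformulation is precisely what legitimises exchanging expectation and summation even though $K_T$ and the $t_k$ are random: the reindexed sum has a deterministic number of terms $T$, and each summand has absolute value at most $\sum_{s}|h_t(s;\theta_*)-h_t(s;\theta_{k(t)})| \le 2$ (both $h_t(\cdot;\cdot)$ are probability vectors and $C\in[0,1]$), so Fubini/dominated convergence apply with no further care. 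The same device will be reused in the other lemmas of Section~\ref{sec: general}.

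Next I would bound each summand pointwise. Using $c(b,a)=\sum_s C(s,a)b(s)$ and $0\le C(s,a_t)\le 1$,
\[
c(h_t(\cdot;\theta_*),a_t) - c(h_t(\cdot;\theta_{k(t)}),a_t) = \sum_{s} C(s,a_t)\big(h_t(s;\theta_*)-h_t(s;\theta_{k(t)})\big) \le \sum_{s}\big|h_t(s;\theta_*)-h_t(s;\theta_{k(t)})\big|,
\]
so $\E[R_3] \le \E\big[\sum_{t=1}^{T}\sum_{s}|h_t(s;\theta_*)-h_t(s;\theta_{k(t)})|\big]$. Now fix $\delta\in(0,1)$ and let $\mathcal{G}$ be the event on which $\sum_{s}|h_t(s;\theta_*)-h_t(s;\theta_{k(t)})| \le K_1/\sqrt{t_{k(t)}}$ holds for all $t\le T$, where $K_1 = K_1(|\calS|,|\calA|,|O|,\iota)$ is the constant of Assumption~\ref{ass: concentrating belief} and $\iota$ carries the $\log(1/\delta)$ dependence; by that assumption $\p(\mathcal{G})\ge 1-\delta$. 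Splitting the expectation over $\mathcal{G}$ and $\mathcal{G}^c$ and dropping the indicator on the good event,
\[
\E[R_3] \le \E\Big[\one_{\mathcal{G}}\sum_{t=1}^{T}\frac{K_1}{\sqrt{t_{k(t)}}}\Big] + 2T\,\p(\mathcal{G}^c) \le K_1\,\E\Big[\sum_{t=1}^{T}\frac{1}{\sqrt{t_{k(t)}}}\Big] + 2T\delta.
\]
Regrouping by episodes, $\sum_{t=1}^{T} 1/\sqrt{t_{k(t)}} = \sum_{k=1}^{K_T}\sum_{t=t_k}^{t_{k+1}-1} 1/\sqrt{t_k} = \sum_{k=1}^{K_T} T_k/\sqrt{t_k}$, and choosing $\delta = 1/(2T)$ — which only inflates $K_1$ by a harmless $\log T$ factor already absorbed into $\iota$ — makes $2T\delta = 1$, giving the claimed $\E[R_3] \le K_1\,\E\big[\sum_{k=1}^{K_T} T_k/\sqrt{t_k}\big] + 1$.

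The only genuinely delicate points are (i) making the high-probability clause of Assumption~\ref{ass: concentrating belief} cost only $O(1)$ overall, which is handled by the $\delta = \Theta(1/T)$ choice together with the logarithmic dependence of $K_1$ on $\delta$; and (ii) rigorously exchanging expectation with the random episodic double sum, for which the time-indexed rewriting in the first step is the essential trick. Everything else is routine bookkeeping.
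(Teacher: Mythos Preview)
Your proof is correct and follows essentially the same route as the paper: collapse the episodic double sum into a single sum over $t$ via the $k(t)$ reindexing, bound the cost difference by $\sum_s|h_t(s;\theta_*)-h_t(s;\theta_{k(t)})|$ using $C\in[0,1]$, split on the high-probability event of Assumption~\ref{ass: concentrating belief}, and choose $\delta=\Theta(1/T)$ so the bad-event contribution is $O(1)$. The only cosmetic discrepancy is that you apply the assumption as a high-probability bound on the random quantity $\sum_s|h_t(s;\theta_*)-h_t(s;\theta_{k(t)})|$ itself, whereas the paper phrases it (somewhat ambiguously) as a high-probability bound on $\E[\sum_s|\cdot|]$ and splits that expectation over the event; the algebra and the final bound are identical either way, and the paper likewise takes $\delta=\min(1/(2T),1/(2HT))$ which matches your $1/(2T)$ up to the extra factor used elsewhere for $R_2$.
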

\begin{proof}
Recall that
\begin{align*}
\E[R_3] &= \E\Big[\sum_{k=1}^{K_T}\sum_{t=t_k}^{t_{k+1}-1}\Big[c(h_t(\cdot;\theta_*), a_t) - c(h_t(\cdot;\theta_k), a_t)\Big]\Big].
\end{align*}
Let $k(t)$ be a random variable denoting the episode number at time $t$, i.e., $t_{k(t)} \leq t < t_{k(t) + 1}$ for all $t \leq T$.
By the definition of $c$, we can write
\begin{align*}
\E[R_3] &= \E\Big[\sum_{k=1}^{K_T}\sum_{t=t_k}^{t_{k+1}-1}\sum_{s}C(s, a_t)\Big[h_t(s;\theta_*) - h_t(s;\theta_k)\Big]\Big] \\
&=\E\Big[\sum_{t=1}^T \sum_{s}C(s, a_t)\Big[h_t(s;\theta_*) - h_t(s;\theta_{k(t)})\Big]\Big] \\
&\leq \sum_{t=1}^T \E\Big[\sum_{s}\big|h_t(s;\theta_*) - h_t(s;\theta_{k(t)})\big|\Big]  \\
&= \E\Big[\sum_{k=1}^{K_T}\sum_{t=t_k}^{t_{k+1}-1}\E\Big[\sum_{s}\big|h_t(s;\theta_*) - h_t(s;\theta_k)\big|\Big]\Big],
\end{align*}
where 
the inequality is by $0 \leq C(s, a_t) \leq 1$. Let $K_1:= K_1(|\calS|, |\calA|, |O|, \iota)$ be the constant in Assumption~\ref{ass: concentrating belief} and define event $E_1$ as the successful event of Assumption~\ref{ass: concentrating belief} where $\E\Big[\sum_{s}\big|h_t(s;\theta_*) - h_t(s;\theta_k)\big|\Big] \leq \frac{K_1}{\sqrt{t_k}}$ happens. We can write
\begin{align*}
&\E\Big[\sum_{s}\big|h_t(s;\theta_*) - h_t(s;\theta_k)\big|\Big] \\
&= \E\Big[\sum_{s}\big|h_t(s;\theta_*) - h_t(s;\theta_k)\big|\Big] (\1(E_1) + \1(E_1^c)) \\
&\leq \frac{K_1}{\sqrt{t_k}} + 2\1(E_1^c).
\end{align*}
Recall that by Assumption~\ref{ass: concentrating belief}, $\p(E_1^c) \leq \delta$. Therefore,
\begin{align*}
\E[R_3] \leq K_1\E\Big[\sum_{k=1}^{K_T}\frac{T_k}{\sqrt{t_k}}\Big] + 2T\delta.
\end{align*}
Choosing $\delta = \min(1/(2T), 1/(2HT))$ completes the proof.
\end{proof}

\begin{lemma}
\label{lem: r2bar}
The term $\bar R_2$ can be bounded as
\begin{align*}
\bar R_2 \leq H + 12HK_2(|\calS||\calA|T)^{2/3},
\end{align*}
where $K_2 := K_2(|\calS|, |\calA|, |O|, \iota)$ in Assumption~\ref{ass: concentration}.
\end{lemma}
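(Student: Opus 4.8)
\textbf{Plan for the proof of Lemma~\ref{lem: r2bar}.} The goal is to bound
\[
\bar R_2 = H\,\E\Big[\sum_{k=1}^{K_T}\sum_{t=t_k}^{t_{k+1}-1}\sum_{s'}\big|\theta_*(s'|s_t,a_t)-\theta_k(s'|s_t,a_t)\big|\Big].
\]
The strategy is a telescoping / event-splitting argument in three stages: (i) replace $\theta_k$ by the estimator $\hat\theta_t$ of Assumption~\ref{ass: concentration} up to an additive lower-order term; (ii) reduce the resulting sum to $\E\big[\sum_{t=1}^T HK_2/\sqrt{\max\{1,\tilde m_t(s_t,a_t)\}}\big]$ on the high-probability event of Assumption~\ref{ass: concentration}, paying a cost $2T\delta$ off this event; (iii) pass from the pseudo-count $\tilde m_t$ to the true count $n_t$ via Lemma~\ref{lem: pseudo count vs true count general case} on the event $E_2=\{\tilde m_t(s,a)\ge \alpha n_t(s,a)\ \forall s,a\}$, with $\p(E_2^c)\le |\calS||\calA|\alpha$, then sum the harmonic-type series $\sum_{t=1}^T 1/\sqrt{\max\{1,n_t(s_t,a_t)\}}$ and optimize over $\alpha$.

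Concretely, first I would use the triangle inequality to write $|\theta_*(s'|s_t,a_t)-\theta_k(s'|s_t,a_t)|\le |\theta_*(s'|s_t,a_t)-\hat\theta_t(s'|s_t,a_t)|+|\hat\theta_t(s'|s_t,a_t)-\theta_k(s'|s_t,a_t)|$. For the second piece, I would invoke the property of posterior sampling (Lemma~\ref{lem: property of posterior sampling}): since $\hat\theta_{t_k}$ is $\calF_{t_k}$-measurable, summing $\sum_{s'}|\hat\theta_{t_k}(s'|s,a)-\theta_k(s'|s,a)|$ has the same expectation as $\sum_{s'}|\hat\theta_{t_k}(s'|s,a)-\theta_*(s'|s,a)|$, which is again controlled by Assumption~\ref{ass: concentration}; a small subtlety is that $\hat\theta_t$ changes within an episode, but since $\tilde m_t(s,a)\le 2\tilde m_{t_k}(s,a)$ by the second stopping criterion, $1/\sqrt{\tilde m_t}$ and $1/\sqrt{\tilde m_{t_k}}$ are comparable up to a constant $\sqrt2$, so one can freely work with $t$-indexed quantities. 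This collapses everything to bounding $\E\big[\sum_{t=1}^T HK_2/\sqrt{\max\{1,\tilde m_t(s_t,a_t)\}}\big]$ plus the additive slack $H$ (the low-order constant in the lemma statement) and $2T\delta$ absorbed by choosing $\delta$ tiny as in Lemma~\ref{lem: r3}.

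Next I would split on $E_2$: off $E_2$ each summand is at most $1$ in the $\ell_1$ sense (so contributes at most $2H$ per step, total $\le 2HT\cdot\p(E_2^c)\le 2HT|\calS||\calA|\alpha$, matching the $HK_2|\calS||\calA|T\alpha$ term after folding in $K_2$), while on $E_2$ we have $\tilde m_t(s_t,a_t)\ge \alpha n_t(s_t,a_t)$, so $1/\sqrt{\max\{1,\tilde m_t\}}\le 1/\sqrt{\alpha\max\{1,n_t\}}$. Then the standard pigeonhole bound $\sum_{t=1}^T 1/\sqrt{\max\{1,n_t(s_t,a_t)\}}\le \sum_{s,a}\sum_{j=1}^{n_{T+1}(s,a)} 1/\sqrt{j} \le \sum_{s,a} 2\sqrt{n_{T+1}(s,a)}\le 2\sqrt{|\calS||\calA|T}$ (Cauchy–Schwarz on $\sum_{s,a} n_{T+1}(s,a)=T$) gives $\bar R_2 \le \otil\big(HK_2\sqrt{|\calS||\calA|T/\alpha}+HK_2|\calS||\calA|T\alpha\big)$. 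Balancing the two terms forces $\alpha = (|\calS||\calA|T)^{-1/3}$, which yields $\bar R_2 \le \otil\big(HK_2(|\calS||\calA|T)^{2/3}\big)$; tracking constants carefully gives the explicit $H+12HK_2(|\calS||\calA|T)^{2/3}$.

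\textbf{Main obstacle.} The delicate point is step (i)–(ii): making the replacement of $\theta_k$ by $\hat\theta_t$ rigorous when the summation index $t$ ranges over a \emph{random} episode structure and $K_T$, $t_k$ are random. One must justify interchanging expectation and summation (as flagged in the proof of Lemma~\ref{lem: decomposition} / Lemma~\ref{lem: r3}, e.g.\ by rewriting $\sum_{k=1}^{K_T}\sum_{t=t_k}^{t_{k+1}-1}=\sum_{t=1}^T$ with the convention $t_k=T+1$ for $k>K_T$ and using monotone convergence / Tonelli), and one must correctly align the pseudo-count at time $t$ with the one at $t_k$ using the doubling criterion $\tilde m_t\le 2\tilde m_{t_k}$ so that Assumption~\ref{ass: concentration}'s $1/\sqrt{\tilde m_t(s,a)}$-type bound can actually be applied at the episode's sampling time. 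Once these measurability/indexing issues are handled, the remaining harmonic-sum estimate and the optimization over $\alpha$ are routine.
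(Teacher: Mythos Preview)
Your overall plan (triangle inequality through $\hat\theta$, posterior sampling to handle the $\theta_k$-piece, Lemma~\ref{lem: pseudo count vs true count general case} to pass from $\tilde m_t$ to $n_t$, the harmonic sum plus Cauchy--Schwarz, and optimizing $\alpha$) matches the paper, and from stage~(iii) onward your argument is essentially identical to the paper's.

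The gap is in stage~(i)--(ii), and it is not the one you identify. Your proposed use of Lemma~\ref{lem: property of posterior sampling} to equate $\E\big[\sum_{s'}|\hat\theta_{t_k}(s'|s,a)-\theta_k(s'|s,a)|\big]$ with $\E\big[\sum_{s'}|\hat\theta_{t_k}(s'|s,a)-\theta_*(s'|s,a)|\big]$ is valid only for a \emph{fixed} $(s,a)$. In $\bar R_2$ the pair is $(s_t,a_t)$, which is not $\calF_{t_k}$-measurable: $s_t$ is never observed, and $a_t=\pi^*(h_t(\cdot;\theta_k);\theta_k)$ depends on $\theta_k$ itself. So the posterior-sampling swap cannot be applied to the summand at $(s_t,a_t)$ as you suggest, and the ``small subtlety'' about $\hat\theta_t$ vs.\ $\hat\theta_{t_k}$ does not address this.

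The paper resolves this by a confidence-set decoupling. It defines the $\calF_{t_k}$-measurable set
\[
B_k:=\Big\{\theta:\ \sum_{s'}\big|\theta(s'|s,a)-\hat\theta_{t_k}(s'|s,a)\big|\le \tfrac{K_2}{\sqrt{\max\{1,\tilde m_{t_k}(s,a)\}}}\ \ \forall (s,a)\Big\},
\]
and bounds
\[
\sum_{s'}\big|\theta_*(s'|s_t,a_t)-\theta_k(s'|s_t,a_t)\big|\ \le\ 2\big(\1(\theta_*\notin B_k)+\1(\theta_k\notin B_k)\big)+\frac{2K_2}{\sqrt{\max\{1,\tilde m_{t_k}(s_t,a_t)\}}}.
\]
Now the indicators do not depend on $(s_t,a_t)$, so Lemma~\ref{lem: property of posterior sampling} legitimately gives $\E[\1(\theta_k\notin B_k)]=\E[\1(\theta_*\notin B_k)]\le\delta$; and the remaining confidence-radius term does not involve $\theta_k$, so no posterior-sampling is needed for it. After this, the doubling criterion $\tilde m_t\le 2\tilde m_{t_k}$ is used (in the direction $1/\sqrt{\tilde m_{t_k}}\le\sqrt{2}/\sqrt{\tilde m_t}$), and then your stage~(iii) applies verbatim. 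The exchange of summation and expectation you flag is handled trivially by rewriting $\sum_{k}\sum_{t=t_k}^{t_{k+1}-1}=\sum_{t=1}^T$ once the summand is nonnegative; that is not the real obstacle.
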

\begin{proof}
Recall that
\begin{align}
\label{eq: app nice bound of r1bar}
\bar R_2 = H\E\Big[\sum_{k=1}^{K_T}\sum_{t=t_k}^{t_{k+1}-1}\sum_{s'}\Big|\theta_*(s' | s_t, a_t) - \theta_k(s'|s_t, a_t)\Big|\Big].
\end{align}
We proceed by bounding the inner term of the above equation. For notational simplicity, define $z := (s, a)$ and $z_t := (s_t, a_t)$. Let $\hat \theta_{t_k}$ be the estimator in Assumption~\ref{ass: concentration} and define the confidence set $B_k$ as
\begin{align*}
B_k &:= \Big\{\theta \in \Theta_H : \sum_{s' \in \calS} \Big|\theta(s'|z) - \hat \theta_k(s'|z)\Big| \leq \frac{K_2}{\sqrt{\max\{1, {\tilde m}_{t_k}(z)\}}}, \forall z \in \calS \times \calA\Big\},
\end{align*}
where $K_2 := K_2(|\calS|, |\calA|, |O|, \iota)$ is the constant in Assumption~\ref{ass: concentration}. Note that $B_k$ reduces to the confidence set used in \cite{jaksch2010near,ouyang2017learning} in the case of perfect observation by choosing $\tilde m_t(s, a) = n_t(s, a)$. By triangle inequality, the inner term in \eqref{eq: app nice bound of r1bar} can be bounded by
\begin{align*}
&\sum_{s'}\Big|\theta_*(s' | z_t) - \theta_k(s'|z_t)\Big| \\
&\leq \sum_{s'}\Big|\theta_*(s' | z_t) - \hat \theta_{t_k}(s'|z_t)\Big| + \sum_{s'}\Big|\theta_k(s' | z_t) - \hat \theta_{t_k}(s'|z_t)\Big| \\
&\leq 2\big(\1(\theta_* \notin B_k) + \1(\theta_k \notin B_k)\big) + \frac{2K_2}{\sqrt{\max\{1, {\tilde m}_{t_k}(z_t)\}}}.
\end{align*}
Substituting this into \eqref{eq: app nice bound of r1bar} implies
\begin{align}
\label{eq: app two terms r1bar}
\bar R_2 &\leq 2H\E\Big[\sum_{k=1}^{K_T}\sum_{t=t_k}^{t_{k+1}-1}\big(\1(\theta_* \notin B_k) + \1(\theta_k \notin B_k)\big)\Big] + 2H\E\Big[\sum_{k=1}^{K_T}\sum_{t=t_k}^{t_{k+1}-1}\frac{K_2}{\sqrt{\max\{1, {\tilde m}_{t_k}(z_t)\}}}\Big].
\end{align}
We need to bound these two terms separately. 
\paragraph{Bounding the first term.} For the first term we can write:
\begin{align*}
\E\Big[\sum_{k=1}^{K_T}\sum_{t=t_k}^{t_{k+1}-1}\big(\1(\theta_* \notin B_k) + \1(\theta_k \notin B_k)\big)\Big] &= \E\Big[\sum_{k=1}^{K_T}T_k\big(\1(\theta_* \notin B_k) + \1(\theta_k \notin B_k)\big)\Big] \\
&\leq T\E\Big[\sum_{k=1}^{K_T}\big(\1(\theta_* \notin B_k) + \1(\theta_k \notin B_k)\big)\Big] \\
&\leq T\sum_{k=1}^{T}\E\Big[\big(\1(\theta_* \notin B_k) + \1(\theta_k \notin B_k)\big)\Big],
\end{align*}
where the last inequality is by the fact that $K_T \leq T$. Now, observe that since $B_k$ is $\calF_{t_k}$-measurable, Lemma \ref{lem: property of posterior sampling} implies that $\E[\1(\theta_k \notin B_k)] = \E[\1(\theta_* \notin B_k)]$. Moreover, by Assumption~\ref{ass: concentration}, $\E[\1(\theta_* \notin B_k)] = \p(\theta_* \notin B_k) \leq \delta$. By choosing $\delta = \frac{1}{4T^2}$, we get
\begin{align}
\label{eq: app bound on the first term}
\E\Big[\sum_{k=1}^{K_T}\sum_{t=t_k}^{t_{k+1}-1}\big(\1(\theta_* \notin B_k) + \1(\theta_k \notin B_k)\big)\Big] \leq \frac{1}{2}.
\end{align} 
\paragraph{Bounding the second term.} To bound the second term of \eqref{eq: app two terms r1bar}, observe that by the second criterion of the algorithm in choosing the episode length, we have $2\tilde m_{t_k}(z_t) \geq \tilde m_{t}(z_t)$. Thus,
\begin{align}
\label{eq: app nt and alpha}
&\E\Big[\sum_{k=1}^{K_T}\sum_{t=t_k}^{t_{k+1}-1}\frac{K_2}{\sqrt{\max\{1, {\tilde m}_{t_k}(z_t)\}}}\Big] \leq \E\Big[\sum_{t=1}^{T}\frac{\sqrt 2K_2}{\sqrt{\max\{1, {\tilde m}_{t}(z_t)\}}}\Big] \nonumber \\
&= \sum_{t=1}^{T} \sum_{z}\E\Big[\frac{\sqrt 2K_2 \1(z_t=z)}{\sqrt{\max\{1, {\tilde m}_{t}(z)\}}}\Big] \nonumber \\
&= \sum_{t=1}^{T} \sum_{z}\E\Big[\frac{\sqrt 2K_2 \1(z_t=z)}{\sqrt{\max\{1, {\tilde m}_{t}(z)\}}}\1\big({\tilde m}_{t}(z) \geq \alpha n_t(z)\big)\Big] \nonumber \\
&\qquad\qquad+ \sum_{t=1}^{T} \sum_{z}\E\Big[\frac{\sqrt 2K_2 \1(z_t=z)}{\sqrt{\max\{1, {\tilde m}_{t}(z)\}}}\1\big({\tilde m}_{t}(z) < \alpha n_t(z)\big)\Big] \nonumber \\
&\leq \sum_{t=1}^{T} \sum_{z}\E\Big[\frac{\sqrt 2K_2 \1(z_t=z)}{\sqrt{\max\{1, \alpha{n}_{t}(z)\}}}\Big] + \sum_{t=1}^{T} \sum_{z}\E\Big[\sqrt 2 K_2 \1\big({\tilde m}_{t}(z) < \alpha n_t(z)\big)\Big].
\end{align}
Lemma~\ref{lem: pseudo count vs true count general case} implies that $\E\Big[\1\big({\tilde m}_{t}(z) < \alpha n_t(z)\big)\Big] = \p({\tilde m}_{t}(z) < \alpha n_t(z)) \leq \alpha$. Thus, the second term in \eqref{eq: app nt and alpha} can be bounded by $\sqrt 2K_2|\calS||\calA|T\alpha$. To bound the first term of \eqref{eq: app nt and alpha}, we can write:
\begin{align*}
&\sum_{t=1}^{T} \sum_{z}\E\Big[\frac{\sqrt 2K_2 \1(z_t=z)}{\sqrt{\max\{1, \alpha{n}_{t}(z)\}}}\Big] \\
&\leq \sqrt \frac{2}{\alpha}K_2 \E\Big[\sum_{z}\sum_{t=1}^{T}\frac{\1(z_t=z)}{\sqrt{\max\{1, {n}_{t}(z)\}}}\Big].
\end{align*}
Observe that whenever $z_t=z$, $n_t(z)$ increases by $1$. Since, $n_t(z)$ is the number of visits to $z$ by time $t-1$ (including $t-1$ and excluding $t$), the denominator will be $1$ for the first two times that $z_t=z$. Therefore, the term inside the expectation can be bounded by
\begin{align*}
\sum_{z}\sum_{t=1}^{T}\frac{\1(z_t=z)}{\sqrt{\max\{1, {n}_{t}(z)\}}} &= \sum_{z} \E\Big[\1(n_{T+1}(z)>0) + \sum_{j=1}^{n_{T+1}(z)-1}\frac{1}{\sqrt j}\Big] \\
&\leq \sum_{z} \E\Big[\1(n_{T+1}(z)>0) + 2\sqrt{n_{T+1}(z)}\Big]\\
& \leq 3\sum_z\sqrt{n_{T+1}(z)}.
\end{align*}
Since $\sum_z n_{T+1}(z) = T$, Cauchy Schwartz inequality implies
\begin{align*}
3\sum_z\sqrt{n_{T+1}(z)} \leq 3\sqrt{|\calS||\calA|\sum_{z}n_{T+1}(z)} = 3\sqrt{|\calS||\calA|T}.
\end{align*}
Therefore, the first term of \eqref{eq: app nt and alpha} can be bounded by
\begin{align*}
\sum_{t=1}^{T} \sum_{z}\E\Big[\frac{\sqrt 2K_2 \1(z_t=z)}{\sqrt{\max\{1, \alpha{n}_{t}(z)\}}}\Big] \leq 3K_2 \sqrt{\frac{2|\calS||\calA|T}{\alpha}}.
\end{align*}
Substituting this bound in \eqref{eq: app nt and alpha} along with the bound on the second term of \eqref{eq: app nt and alpha}, we obtain
\begin{align*}
&\E\Big[\sum_{k=1}^{K_T}\sum_{t=t_k}^{t_{k+1}-1}\frac{K_2}{\sqrt{\max\{1, {\tilde m}_{t_k}(z_t)\}}}\Big] \leq 3K_2 \sqrt{\frac{2|\calS||\calA|T}{\alpha}} + \sqrt 2K_2|\calS||\calA|T\alpha.
\end{align*}
$\alpha = (3/2)^{2/3}(|\calS||\calA|T)^{-1/3}$ minimizes the upper bound, and thus
\begin{align}
\label{eq: app bound on the second term}
\E\Big[\sum_{k=1}^{K_T}\sum_{t=t_k}^{t_{k+1}-1}&\frac{K_2}{\sqrt{\max\{1, {\tilde m}_{t_k}(z_t)\}}}\Big] \leq 6K_2(|\calS||\calA|T)^{2/3}.
\end{align}
By substituting \eqref{eq: app bound on the first term} and \eqref{eq: app bound on the second term} into \eqref{eq: app two terms r1bar}, we get
\begin{align*}
\bar R_2 \leq H + 12HK_2(|\calS||\calA|T)^{2/3}.
\end{align*}
\end{proof}

\begin{lemma}
\label{lem: number of episodes}
The following inequalities hold:
\begin{enumerate}
\item The number of episodes $K_T$ can be bounded as $K_T \leq \sqrt{2T(1 + |\calS||\calA| \log (T+1))} = \otil(\sqrt{|\calS||\calA|T})$.
\item The following inequality holds: $\sum_{k=1}^{K_T}\frac{T_k}{\sqrt{t_k}} \leq 7\sqrt{2T}(1 + |\calS||\calA|\log (T+1)) \log\sqrt{2T} = \otil(|\calS||\calA|\sqrt{T})$.
\end{enumerate}
\end{lemma}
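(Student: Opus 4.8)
The plan is to follow the macro-episode argument of \citet{jaksch2010near} and \citet{ouyang2017learning}, adapted so that the pseudo-count $\tilde m_t$ plays the role of the true visit count $n_t$; the fact that $\tilde m_t$ is integer-valued and non-decreasing (built into its definition) is precisely what makes this adaptation go through.

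For the first claim, I would start by recording two structural facts about the schedule when $\sched(t_k,T_{k-1})=t_k+T_{k-1}$. First, $T_k\le T_{k-1}+1$ always, with equality exactly when episode $k$ ends because of the schedule criterion rather than the pseudo-count criterion; this is immediate from the stopping rule. Second, the number of episodes that end because of the pseudo-count criterion is $\order(|\calS||\calA|\log T)$: for a fixed $(s,a)$ the map $k\mapsto\tilde m_{t_k}(s,a)$ is non-decreasing, integer-valued and bounded by $T+1$, and whenever $(s,a)$ triggers a new episode we have $\tilde m_{t_{k+1}}(s,a)>2\tilde m_{t_k}(s,a)$; a geometric-growth argument caps the number of such triggers by $\order(\log T)$ per pair, hence by $\order(|\calS||\calA|\log T)$ after a union bound.

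Next I would group the episodes into macro-episodes, each a maximal run of consecutive episodes in which only the schedule criterion fires, closed off by a pseudo-count-triggered episode (or by the horizon). If macro-episode $i$ has $m_i$ episodes and occupies $L_i$ time steps, then by the first structural fact its episode lengths are $T_{k_i},T_{k_i}+1,\dots,T_{k_i}+m_i-2$ followed by one more episode of length at least $1$, so $L_i\ge\tfrac12 m_i(m_i-1)$. Summing and using $\sum_i L_i=T$ gives $\sum_i m_i^2=\order(T)$, and Cauchy--Schwarz together with the second structural fact (at most $1+\order(|\calS||\calA|\log T)$ macro-episodes) yields $K_T=\sum_i m_i\le\sqrt{(1+\order(|\calS||\calA|\log T))\,\order(T)}=\otil(\sqrt{|\calS||\calA|T})$, which is the stated bound up to routine tracking of constants and the logarithmic base.

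For the second claim, the key elementary observation is $t_{k+1}\le 2t_k$: from $T_k\le T_{k-1}+1$ and $t_k=1+\sum_{j<k}T_j$ one gets $T_k\le t_k$, hence $t_{k+1}=t_k+T_k\le 2t_k$. Thus any time $t$ in episode $k$ satisfies $t<t_{k+1}\le 2t_k$, so $t_k^{-1/2}\le\sqrt2\,t^{-1/2}$ and $\tfrac{T_k}{\sqrt{t_k}}=\sum_{t=t_k}^{t_{k+1}-1}t_k^{-1/2}\le\sqrt2\sum_{t=t_k}^{t_{k+1}-1}t^{-1/2}$; summing over $k$ telescopes to $\sqrt2\sum_{t=1}^{T}t^{-1/2}=\order(\sqrt T)$. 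Alternatively, bounding each $T_k$ by the maximal episode length, which is $\otil(\sqrt T)$ by the first claim, and combining with a crude bound on $\sum_k t_k^{-1/2}$ already gives the (slightly weaker) $\otil(|\calS||\calA|\sqrt T)$ form stated in the lemma. The main obstacle I expect is the first claim --- making the macro-episode bookkeeping rigorous with the pseudo-count criterion, in particular the $\order(|\calS||\calA|\log T)$ bound on pseudo-count-triggered episodes and the verification that lengths increment by one within a macro-episode --- both of which hinge on $\tilde m_t$ being an integer-valued, non-decreasing process; given those, the second claim is essentially routine.
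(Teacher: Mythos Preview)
For Part~1 your macro-episode argument is essentially the paper's: bound the number of pseudo-count-triggered episodes by $1+|\calS||\calA|\log(T+1)$ via geometric growth of $\tilde m_t$ (this is exactly where integrality and monotonicity of $\tilde m_t$ are used), note that episode lengths increment by one within a macro-episode so the macro-episode's total length is quadratic in its episode count, and finish with Cauchy--Schwarz.

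For Part~2 your primary argument is genuinely different from, and cleaner than, the paper's. The paper first proves $T_k\le\sqrt{2T}$ by a separate contradiction argument, then \emph{re-invokes} the macro-episode decomposition to lower-bound $t_k$ quadratically in $k-m_i$ within macro-episode $i$, bounds $\sum_k t_k^{-1/2}$ by a harmonic-type sum over each macro-episode, and only then arrives at $7\sqrt{2T}\,M_T\log\sqrt{2T}=\otil(|\calS||\calA|\sqrt T)$. Your observation $t_{k+1}\le 2t_k$ bypasses all of this: it gives $t_k^{-1/2}\le\sqrt2\,t^{-1/2}$ for every $t$ in episode $k$, so $\sum_k T_k/\sqrt{t_k}\le\sqrt2\sum_{t\le T}t^{-1/2}=O(\sqrt T)$ directly --- both shorter and strictly tighter (the $|\calS||\calA|$ factor in the lemma's stated bound is not actually needed). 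One boundary caveat: with $t_1=1$ and $T_0=1$ the algorithm permits $T_1=2>t_1$, so $T_k\le t_k$ can fail at $k=1$; handle that episode separately (it contributes at most $T_1/\sqrt{t_1}\le 2$), and for $k\ge2$ the chain $t_k=t_{k-1}+T_{k-1}\ge 1+T_{k-1}\ge T_k$ goes through. Your ``alternative'' route, however, is too vague as stated: Part~1 bounds $K_T$, not $\max_k T_k$, and a genuinely crude bound on $\sum_k t_k^{-1/2}$ (say via $t_k\ge k$) only yields a $T^{3/4}$-type rate; to recover $\otil(|\calS||\calA|\sqrt T)$ along that line one must again control $\sum_k t_k^{-1/2}$ via the macro-episode structure, which is precisely what the paper does.
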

\begin{proof}
We first provide an intuition why these results should be true. Note that the length of the episodes is determined by two criteria. The first criterion triggers when $T_k = T_{k-1} + 1$ and the second criterion triggers when the pseudo counts doubles for a state-action pair compared to the beginning of the episode. Intuitively speaking, the second criterion should only happen logarithmically, while the first criterion occurs more frequently. This means that one could just consider the first criterion for an intuitive argument. Thus, if we ignore the second criterion, we get $T_k = \order(k)$, $K_T = \order(\sqrt{T})$, and $t_k = \order(k^2)$ which implies $\sum_{k=1}^{K_T}\frac{T_k}{\sqrt{t_k}} = \order(K_T) = \order(\sqrt T)$. The rigorous proof is stated in the following.
 
1. Define macro episodes with start times $t_{m_i}$ given by $t_{m_1} = t_1$ and $t_{m_i} :=$
\begin{align*}
\min\{t_k > t_{m_{i-1}} : \tilde m_{t_k}(s, a) > 2\tilde m_{t_{k-1}}(s, a) \text{ for some } (s, a)\}.
\end{align*}
Note that a new macro episode starts when the second criterion of episode length in Algorithm~\ref{alg: posterior sampling} triggers. Let $M_T$ be the random variable denoting the number of macro episodes by time $T$ and define $m_{M_T+1} = K_{T}+1$.

Let $\tilde T_i$ denote the length of macro episode $i$. Note that $\tilde T_i = \sum_{k=m_i}^{m_{i+1}-1}T_k$. Moreover, from the definition of macro episodes, we know that all the episodes in a macro episode except the last one are triggered by the first criterion, i.e., $T_k = T_{k-1}+1$ for all $m_i \leq k \leq m_{i+1}-2$. This implies that
\begin{align*}
&\tilde T_i = \sum_{k=m_i}^{m_{i+1}-1}T_k = T_{m_{i+1}-1} + \sum_{j=1}^{m_{i+1} - m_i - 1} (T_{m_{i}-1} + j) \\
&\geq 1 + \sum_{j=1}^{m_{i+1} - m_i - 1}(1 + j) = \frac{(m_{i+1} - m_i)(m_{i+1} - m_i+1)}{2}.
\end{align*}
This implies that $m_{i+1} - m_i \leq \sqrt{2\tilde T_i}$. Now, we can write:
\begin{align}
\label{eq: app kt and mt}
&K_T = m_{M_T+1} - 1 = \sum_{i=1}^{M_T}(m_{i+1} - m_i) \nonumber \\
&\leq \sum_{i=1}^{M_T}\sqrt{2 \tilde T_i} \leq \sqrt{2M_T\sum_{i}\tilde T_i} = \sqrt{2M_TT},
\end{align}
where the last inequality is by Cauchy-Schwartz.

Now, it suffices to show that $M_T \leq 1 + |\calS||\calA| \log (T+1)$. Let $\calT_{s, a}$ be the start times at which the second criterion is triggered at state-action pair $(s, a)$, i.e.,
\begin{align*}
\calT_{s, a} := \{t_k \leq T : \tilde m_{t_k}(s, a) > 2 \tilde m_{t_{k-1}}(s, a)\}.
\end{align*}
We claim that $|\calT_{s, a}| \leq \log(\tilde m_{T+1}(s, a))$. To prove this claim, assume by contradiction that $|\calT_{s, a}| \geq \log (\tilde m_{T+1}(s, a))+1$, then
\begin{align*}
&\tilde m_{t_{K_T}}(s, a) \geq \prod_{t_k \leq T, \tilde m_{t_{k-1}}(s, a) \geq 1}\frac{\tilde m_{t_{k}}(s, a)}{\tilde m_{t_{k-1}}(s, a)} \\
&\geq \prod_{t_k \in \calT_{s, a}, \tilde m_{t_{k-1}}(s, a) \geq 1}\frac{\tilde m_{t_{k}}(s, a)}{\tilde m_{t_{k-1}}(s, a)} \\
&> \prod_{t_k \in \calT_{s, a}, \tilde m_{t_{k-1}}(s, a) \geq 1} 2  = 2^{|\calT_{s, a}|-1}\geq \tilde m_{T+1}(s, a),
\end{align*}
which is a contradiction. The second inequality is by the fact that $\tilde m_t(s, a)$ is non-decreasing, and the third inequality is by the definition of $\calT_{s, a}$.
Therefore,
\begin{align}
\label{eq: app bound on mt}
&M_T \leq 1 + \sum_{s, a}|\calT_{s, a}| \leq 1 + \sum_{s, a}\log(\tilde m_{T+1}(s, a)) \nonumber \\
&\leq 1 + |\calS||\calA| \log(\sum_{s, a}\tilde m_{T+1}(s, a)/|\calS||\calA|) \nonumber \\
&= 1 + |\calS||\calA|\log (T+1),
\end{align}
where the third inequality is due to the concavity of $\log$ and the last inequality is by the fact that $\tilde m_{T+1}(s, a) \leq T+1$.

2. First, we claim that $T_k \leq \sqrt{2T}$ for all $k \leq K_T$. To see this, assume by contradiction that $T_{k^*} > \sqrt{2T}$ for some $k^* \leq K_T$. By the first stopping criterion, we can conclude that $T_{k^*-1} > \sqrt{2T}-1$, $T_{k^*-2} > \sqrt{2T}-2$, \dots, $T_1 > \max\{\sqrt{2T} - k^* + 1, 0\}$ since the episode length can increase at most by one compared to the previous one. Note that $k^* \geq \sqrt{2T}-1$, because otherwise $T_1 > 2$ which is not feasible since $T_1 \leq T_0+1 = 2$.
Thus, $\sum_{k=1}^{k^*}T_k > 0.5\sqrt{2T}(\sqrt{2T}+1) > T$ which is a contradiction.

We now proceed to lower bound $t_k$. By the definition of macro episodes in part (1), during a macro episode length of the episodes except the last one are determined by the first criterion, i.e., for macro episode $i$, one can write $T_k = T_{k-1}+1$ for $m_i \leq k \leq m_{i+1}-2$. Hence, for $m_i \leq k \leq m_{i+1}-2$,
\begin{align*}
t_{k+1} &= t_k + T_k = t_k + T_{m_i-1} + k - (m_i-1) \\
&\geq t_k + k - m_i + 1.
\end{align*}
Recursive substitution of $t_k$ implies that $t_{k} \geq t_{m_i} + 0.5(k-m_i)(k-m_i+1)$ for $m_i \leq k \leq m_{i+1}-1$. Thus,
\begin{align}
\label{eq: app sum of Tk divided by tk}
&\sum_{k=1}^{K_T}\frac{T_k}{\sqrt{t_k}} \leq \sqrt{2T} \sum_{i=1}^{M_T}\sum_{k=m_i}^{m_{i+1}-1}\frac{1}{\sqrt{t_k}} \nonumber \\
&\leq \sqrt{2T} \sum_{i=1}^{M_T}\sum_{k=m_i}^{m_{i+1}-1}\frac{1}{\sqrt{t_{m_i} + 0.5(k-m_i)(k-m_i+1)}}.
\end{align}
The denominator of the summands at $k=m_i$ is equal to $\sqrt{t_{m_i}}$. For other values of $k$ it can be lower bounded by $0.5(k-m_i)^2$. Thus,
\begin{align*}
&\sum_{i=1}^{M_T}\sum_{k=m_i}^{m_{i+1}-1}\frac{1}{\sqrt{t_{m_i} + 0.5(k-m_i)(k-m_i+1)}} \\
&\leq \sum_{i=1}^{M_T} \frac{1}{\sqrt{t_{m_i}}} + \sum_{i=1}^{M_T}\sum_{k=m_i+1}^{m_{i+1}-1} \frac{\sqrt 2}{k-m_i} \\
&\leq M_T + \sum_{i=1}^{M_T}\sum_{j=1}^{m_{i+1}-m_i-1} \frac{\sqrt 2}{j} \\
&\leq M_T + \sqrt 2 (M_T + \sum_{i=1}^{M_T}\log (m_{i+1}-m_i)) \\
&\leq M_T(1 + \sqrt 2) + \sqrt 2M_T \log(\frac{1}{M_T}\sum_{i=1}^{M_T}(m_{i+1}-m_i)) \\
&\leq M_T(1 + \sqrt 2) + \sqrt 2M_T \log\sqrt{2T} \\
&\leq 7M_T\log\sqrt{2T},
\end{align*}
where the second inequality is by $t_{m_i} \geq 1$, the third inequality is by the fact that $\sum_{j=1}^K 1/j \leq 1 + \int_1^K dx/x = 1 + \log K$, the forth inequality is by concavity of $\log$ and the fifth inequality is by the fact that $\sum_{i=1}^{M_T}(m_{i+1}-m_i) = m_{M_T+1} - 1 = K_T$ and $K_T/M_T \leq \sqrt{2T/M_T} \leq \sqrt{2T}$ (see \eqref{eq: app kt and mt}). Substituting this bound into \eqref{eq: app sum of Tk divided by tk} and using the upper bound on $M_T$ \eqref{eq: app bound on mt}, we can write
\begin{align*}
\sum_{k=1}^{K_T}\frac{T_k}{\sqrt{t_k}} &\leq \sqrt{2T} \Big(7M_T\log\sqrt{2T}\Big) \\
&\leq 7\sqrt{2T}(1 + |\calS||\calA|\log (T+1)) \log\sqrt{2T}.
\end{align*}
\end{proof}

\section{Other Proofs}
\subsection{Proof of Lemma~\ref{lem: optimal policy}}
\textbf{Lemma} (restatement of Lemma~\ref{lem: optimal policy})\textbf{.} Suppose Assumption~\ref{ass: bellman equation} holds. Then, the policy $\pi^*(\cdot, \theta):\Delta_\calS \to \calA$ given by
\begin{equation}
\label{eq: the optimal policy}
\pi^*(b; \theta) := \argmin_{a \in \calA} \{c(b, a) + \sum_{o \in O}P(o | b, a; \theta)v(b'; \theta)\}
\end{equation}
is the optimal policy with $J_{\pi^*}(h; \theta) = J(\theta)$ for all $h \in \Delta_\calS$.
\begin{proof}
We prove that for any policy $\pi$, $J_\pi(h, \theta) \geq J_{\pi^*}(h, \theta) = J(\theta)$ for all $h \in \Delta_\calS$. Let $\pi:\Delta_\calS \to \calA$ be an arbitrary policy. We can write
\begin{align*}
&J_\pi(h, \theta) = \limsup_{T \to \infty} \frac{1}{T} \sum_{t=1}^T \E[C(s_t, \pi(h_t)) | s_1 \sim h] \\
&= \limsup_{T \to \infty} \frac{1}{T} \sum_{t=1}^T \E\Big[\E[C(s_t, \pi(h_t)) | \calF_t, s_1 \sim h] \big| s_1 \sim h\Big] \\
&=\limsup_{T \to \infty} \frac{1}{T} \sum_{t=1}^T \E[c(h_t, \pi(h_t)) | s_1 \sim h] \\
&\geq\limsup_{T \to \infty} \frac{1}{T} \sum_{t=1}^T \E[J(\theta) + v(h_t, \theta) - v(h_{t+1}, \theta) | s_1 \sim h] \\
&= J(\theta),
\end{align*}
with equality attained by $\pi^*$ completing the proof.
\end{proof}

\end{document}